\documentclass{article}
\usepackage{arxiv}

\usepackage[utf8]{inputenc} 
\usepackage[T1]{fontenc}    
\usepackage{hyperref}       
\usepackage{url}            
\usepackage{booktabs}       
\usepackage{amsfonts}       
\usepackage{nicefrac}       
\usepackage{microtype}      
\usepackage{xcolor}         

\usepackage{graphicx}           
\usepackage{natbib}             
\usepackage{amsmath}            
\usepackage{amssymb}            
\usepackage{amsthm}             
\usepackage{enumitem}           
\usepackage{parskip}            
\usepackage{bm}                 
\usepackage{dsfont}             
\usepackage{caption}            
\usepackage{subcaption}         
\usepackage[ruled]{algorithm2e} 
\newcommand{\nosemic}{\SetEndCharOfAlgoLine{\relax}}
\let\oldnl\nl
\newcommand{\nonl}{\renewcommand{\nl}{\let\nl\oldnl}}
\usepackage{wrapfig}

\newcommand{\D}{\mathcal{D}}

\newcommand{\F}{\mathcal{F}}
\renewcommand{\L}{\mathcal{L}}
\newcommand{\R}{\mathbb{R}}
\newcommand{\X}{\mathcal{X}}
\newcommand{\Y}{\mathcal{Y}}

\newcommand{\norm}[1]{\left\lVert #1 \right\rVert}
\newcommand{\argmin}{\mathop{\mathrm{argmin}}}

\newcommand{\bbeta}{\bm{\beta}}
\newcommand{\bgamma}{\bm{\gamma}}
\newcommand{\btheta}{\bm{\theta}}

\newcommand\eqdef{\mathrel{\overset{\makebox[0pt]{\mbox{\normalfont\tiny\sffamily def}}}{=}}} 
\newcommand{\trans}{{\intercal}}
\newcommand{\inv}{{-1}}
\newcommand{\nn}{f_{\mathrm{nn}}}

\newtheorem{theorem}{Theorem}[section]
\newtheorem{lemma}[theorem]{Lemma}
\newtheorem{proposition}[theorem]{Proposition}

\newcommand\bA{\mathbf{A}}
\newcommand\bB{\mathbf{B}}
\newcommand\bC{\mathbf{C}}
\newcommand\bD{\mathbf{D}}
\newcommand\bE{\mathbf{E}}

\newcommand\bG{\mathbf{G}}

\newcommand\bI{\mathbf{I}}

\newcommand\bK{\mathbf{K}}

\newcommand\bV{\mathbf{V}}

\newcommand\bX{\mathbf{X}}
\newcommand\bY{\mathbf{Y}}

\newcommand\ba{\mathbf{a}}

\newcommand\bg{\mathbf{g}}


\newcommand\bp{\mathbf{p}}

\newcommand\bx{\mathbf{x}}
\newcommand\bv{\mathbf{v}}

\newcommand\by{\mathbf{y}}
\newcommand\bz{\mathbf{z}}

\title{Even your Teacher Needs Guidance: \\ Ground-Truth Targets Dampen Regularization Imposed by Self-Distillation\thanks{An updated version of this paper is published under the same title at the 35th Conference on Neural Information Processing Systems (NeurIPS 2021).}}

\author{
  Kenneth Borup \\
  Department of Mathematics\\
  Aarhus University \\
  \texttt{kennethborup@math.au.dk} \\
  \and
  \textbf{Lars N. Andersen} \\
  Department of Mathematics\\
  Aarhus University \\
  \texttt{larsa@math.au.dk}
}

\begin{document}
\maketitle

\begin{abstract}
Knowledge distillation is classically a procedure where a neural network is trained on the output of another network along with the original targets in order to transfer knowledge between the architectures. The special case of self-distillation, where the network architectures are identical, has been observed to improve generalization accuracy. In this paper, we consider an iterative variant of self-distillation in a kernel regression setting, in which successive steps incorporate both model outputs and the ground-truth targets. This allows us to provide the first theoretical results on the importance of using the weighted ground-truth targets in self-distillation. Our focus is on fitting nonlinear functions to training data with a weighted mean square error objective function suitable for distillation, subject to $\ell_2$ regularization of the model parameters. We show that any such function obtained with self-distillation can be calculated directly as a function of the initial fit, and that infinite distillation steps yields the same optimization problem as the original with amplified regularization. Furthermore, we provide a closed form solution for the optimal choice of weighting parameter at each step, and show how to efficiently estimate this weighting parameter for deep learning and significantly reduce the computational requirements compared to a grid search.
\end{abstract}

\section{Introduction}
Knowledge distillation, most commonly known from \citet{hinton2015distilling}, is a procedure to transfer \textit{knowledge} from one neural network (teacher) to another neural network (student).\footnote{Often knowledge distillation is also referred to under the name \textit{Teacher-Student learning}.} Often the student has fewer parameters than the teacher, and the procedure can be seen as a model compression technique. Originally, the distillation procedure achieves the knowledge transfer by training the student network using the original training targets, denoted as ground-truth targets, as well as a softened distribution of logits from the (already trained and fixed) teacher network.\footnote{We will refer to the weighted outputs of the penultimate layer, i.e. pre-activation of the last layer, as logits.} Since the popularization of knowledge distillation by \citet{hinton2015distilling}, the idea of knowledge distillation has been extended to a variety of settings.\footnote{See Section \ref{sec:related_work} for a brief overview, or see \citet{wang2020knowledge} for a more exhaustive survey} This paper will focus on the special case where the teacher and student are of identical architecture, called self-distillation, and where the aim is to improve predictive performance, rather than compressing the model.

The idea of self-distillation is to use outputs from a trained model together with the original targets as new targets for retraining the same model from scratch. We refer to this as one step of self-distillation, and one can iterate this procedure for multiple distillation steps (see Figure \ref{fig:self_distill}). Empirically, it has been shown that this procedure often generalizes better than the model trained merely on the original targets, and achieves higher predictive performance on validation data, despite no additional information being provided during training \citep{furlanello2018born, Ahn2019VariationalTransfer, Yang2018KnowledgeStudents}. 
\begin{figure*}[htbp]
    \centering
    \includegraphics[width=0.9\linewidth]{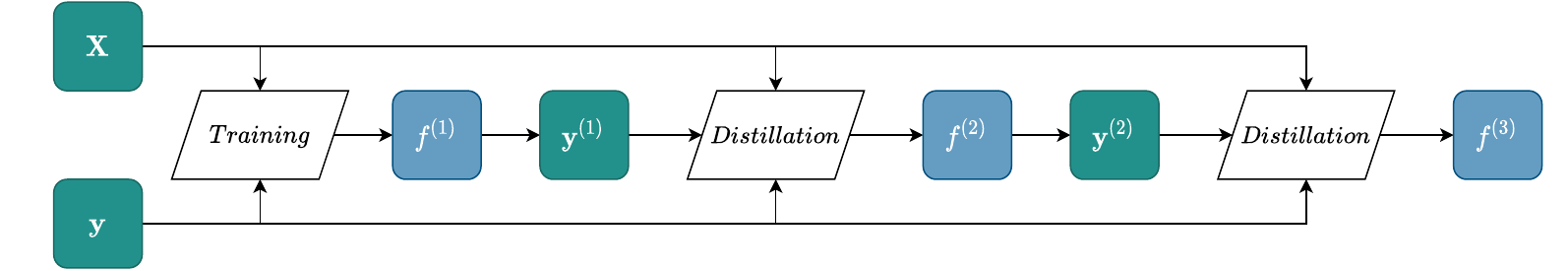}
    \caption{Illustration of self-distillation for two steps after the initial training, where we use the notation $f^{(\tau)} = f(\cdot, \hat{\bbeta}^{(\tau)})$. See Section \ref{sec:problem_setup} for details.}
    \label{fig:self_distill}
\end{figure*}

Modern deep neural networks are often trained in the over-parameterized regime, where the amount of trainable parameters highly exceed the amount of training samples. Under simple first-order methods such as gradient descent, such large networks can fit any target, but in order to generalize well, such overfitting is usually undesirable \citep{Zhang2017UnderstandingGeneralization, Nakkiran2020Deep}. Thus, some type of regularization is typically imposed during training, in order to avoid overfitting. A common choice is to add an $\ell_2$-regularization\footnote{With slight differences, $\ell_2$ regularization is often referred to as weight decay and ridge regularization in deep learning and statistical learning literature, respectively.} term to our objective function, which has been shown to perform comparably to early-stopping gradient descent training \citep{Yao2007OnLearning}. However, in the theoretical study of the over-parameterized regime, regularization is often overlooked, but recent results have shown a connection between wide neural networks and kernel ridge regression through the Neural Tangent Kernel (NTK) \citep{Lee2019WideDescent, Lee2020GeneralizedNetworks, Hu2019SimpleGuarantee}. We briefly elaborate on this connection in Section \ref{sec:ntk}, which motivates our problem setup and connection to deep learning in Section \ref{sec:optimal_alpha_DL}.

\section{Related Work}\label{sec:related_work}
The idea of knowledge distillation dates back to \citet{caruana2006model}, and was later brought to the deep learning setting by \citet{ba2013deep} and more recently popularized by \citet{hinton2015distilling} in the context of compressing neural networks.
Since the original formulation, various extensions have been proposed. Some approaches focus on matching the teacher and student models on statistics other than the distribution of the logits, such as intermediate representations \citep{romero2014fitnets}, spacial attention maps \citep{zagoruyko2016paying}, Jacobians \citep{srinivas2018knowledge}, Gram matrices \citep{yim2017gift}, or relational information between teacher outputs \citep{park2019relational}. Other extensions focus on developing the transfer procedure, such as self-distillation \citep{furlanello2018born}, data-free distillation \citep{lopes2017datafree, nayak2019zeroshot, micaelli2019zeroshot, chen2019datafree, fang2019data}, data distillation \citep{radosavovic2017data}, residual knowledge distillation \citep{gao2020residual}, online distillation \citep{anil2018large} or contrastive distillation \citep{Ahn2019VariationalTransfer, Tian2020Contrastive}.

The practical benefits of knowledge distillation have been proven countless of times in a variety of settings, but the theoretical justification for knowledge distillation is still highly absent. \citet{hinton2015distilling} conjecture that the success of knowledge distillation should be attributed to the transfer of \textit{dark knowledge} (e.g. inter-class relationships revealed in the soft labels). \citet{muller2019does, tang2020understanding}
support this conjecture, and argue that knowledge distillation is similar to performing adaptive label smoothing weighted by the teacher's confidence in the predictions. \citet{Dong2019DistillationNetwork} show the importance of early stopping when training over-parameterized neural networks for distillation purposes by arguing that neural networks tend to fit informative and simple patterns faster than noisy signals, and knowledge distillation utilizes these simple patterns for knowledge transfer. \citet{Abnar2020TransferringDistillation} empirically investigate how knowledge distillation can transfer inductive biases between student and teacher models, and \citet{Gotmare2019ADistillation} empirically shows how the dark knowledge shared by the teacher mainly is disbursed to some of the deepest layers of the teacher.

To the best of our knowledge, few papers investigate knowledge distillation from a rigorous theoretical point of view, and those that do, do so with strong assumptions on the setting. \citet{phuong2019towards} ignore the ground-truth targets during distillation and furthermore assume linear models. \citet{mobahi2020selfdistillation} investigate self-distillation in a Hilbert space setting with kernel ridge regression models where the teacher is trained on the ground-truth targets, and the student (and subsequent iterations) is only trained on the predictions from the teacher without access to the ground-truth targets. They show that self-distillation progressively limits the number of basis functions used to represent the solutions, thus eventually causing the solutions to underfit.
In this paper, we build on the theoretical results of \citet{mobahi2020selfdistillation}, but we include the weighted ground-truth targets in the self-distillation procedure, where we allow the weight to depend on the self-distillation step, and show how this drastically affects the behavior and effect of self-distillation.\footnote{In Supplementary Material \ref{app:mobahi} we relate our problem setup to \citet{mobahi2020selfdistillation} and extend some of our results to a constrained optimization setting with a regularization functional in Hilbert space.}

\paragraph{Our Contributions}
Through a theoretical analysis we show that
\begin{itemize}
    \item the solution at any distillation step can easily be calculated as a function of the initial fit, and infinitely many steps of self-distillation (with fixed distillation weight) correspond to solving the usual kernel ridge regression problem with a specific amplified regularization parameter for non-zero weights,
    \item for fixed distillation weights, self-distillation amplifies the regularization at each distillation step, and the ground-truth targets dampen the sparsification and regularization of the self-distilled solutions, ensuring non-zero solutions for any number of distillation steps,
    \item the optimal distillation weight has a closed form solution for kernel ridge regression, and can be estimated efficiently for neural networks compared to grid search.
\end{itemize}

The proofs of all our results can be found in Supplementary Material \ref{sec:proofs}, and code to reproduce our example in Section \ref{sec:illustrative_example} and results in Section \ref{sec:experiments} can be found at \href{https://github.com/Kennethborup/self_distillation}{github.com/Kennethborup/self\_distillation}.

\paragraph{Notation}
Vectors and matrices are denoted by bold-faced letters; vectors are column vectors by default, and for a vector $\ba$ let $[\ba]_i$ be the $i$-th entry and for a matrix $\bA$ let $[\bA]_{i,j}$ be the $(i,j)$-th entry. Let $\bI_n$ denote the identity matrix of dimension $n$, $[k] = \{1,2, \dots, k\}$, and let $\norm{\cdot}_2$ and $\norm{\cdot}_F$ denote the $\ell_2$-norm and the Frobenius norm, respectively. Finally, for a function $h: \R^n \to \R^d$ and $\bX \in \R^{m \times n}$, we denote by $h(\bX)$ the $\R^{m \times d}$ matrix of outcomes, where the $i$'th row of $h(\bX)$ is the function applied to the $i$'th row of $\bX$, i.e. $[h(\bX)]_{i, \cdot} = h(\bx_i)$.

\section{Problem Setup}\label{sec:problem_setup}
Consider the training dataset $\D \subseteq \R^{d} \times \R$, and let $\X = \{\bx \mid (\bx, y) \in \D\}$ and $\Y = \{y \mid (\bx, y) \in \D\}$ denote the inputs and targets, respectively. Let $\bX = [\bx_i]_{i \in [n]} \in \R^{n \times d}$ be the matrix of inputs, $\by = [y_i]_{i \in [n]}$ the vector of targets, and $\tilde{\bX} \in \R^{m \times d}, \tilde{\by} \in \R^m$ be the matrix and vector of validation inputs and targets, respectively. Given a feature map $\varphi : \R^{d} \to \mathcal{V}$, where $\mathcal{V}$ has dimension $D$, we denote by $\bK = \kappa(\bX, \bX) = [\kappa(\bx_i, \bx_j)]_{i,j=1}^{n} \in \R^{n \times n}$, where $\kappa(\bx_i, \bx_j) = \langle \varphi(\bx_i), \varphi(\bx_j)\rangle$, the symmetric kernel (Gram) matrix associated with the feature map $\varphi$.\footnote{Since the kernel trick makes the predictions depend only on inner products in the feature space, it is not a restriction if $D$ is infinite. However, for ease of exposition we assume $D$ is finite.}

\subsection{Self-Distillation of Kernel Ridge Regressions}\label{sec:sd_krr}
In order to avoid overfitting our training data, we will impose a regularization term on our weights, and thus investigate the kernel ridge regression functions $f \in \F$ mapping $f: \X \to \Y$, to construct a solution which best approximates the true underlying data generating map and generalize well to new unseen data from this underlying map. We consider self-distillation in the kernel ridge regression setup; i.e. consider the (self-distillation) objective function
\begin{align}\label{eq:distill_objective}
    &\L^{\mathrm{distill}}(f(\bX, \bbeta), \by_{1}, \by_{2}) = \frac{\alpha}{2}\norm{f(\bX, \bbeta) - \by_{1}}_2^2 + \frac{1 - \alpha}{2}\norm{f(\bX, \bbeta) - \by_{2}}_2^2 + \frac{\lambda}{2}\norm{\bbeta}_2^2,
\end{align}
where $\alpha \in [0,1]$, $\lambda > 0$, $\by_{1}, \by_{2} \in \R^n$ and $f(\bX, \bbeta) = \varphi(\bX)\bbeta$. The objective in \eqref{eq:distill_objective} is a weighted sum of two Mean Square Error (MSE) objective functions with different targets\footnote{It is easy to verify that minimizing \eqref{eq:distill_objective} and the classic MSE objective with a weighted target, i.e. $\tilde{\L}^{\mathrm{distill}}(f(\bX, \bbeta), \by_{1}, \by_{2}) = \frac{1}{2}\norm{f(\bX, \bbeta) - (\alpha\by_{1} + (1-\alpha)\by_2)}_2^2 + \frac{\lambda}{2}\norm{\bbeta}_2^2$, are equivalent and that the objective functions are equal up to the additive constant $\alpha(\alpha - 1)\norm{\by_1 - \by_2}_2^2$.} and an $\ell_2$-regularization on the model weights. Minimization of \eqref{eq:distill_objective} w.r.t. $\bbeta$ is straightforward and yields the minimizer
\begin{align}
    \hat{\bbeta} &\eqdef \argmin_{\beta} \L^{\mathrm{distill}}(f(\bX, \bbeta), \by_{1}, \by_{2}) \label{eq:unconstrained_optimization}
    = \varphi(\bX)^\trans \left(\bK + \lambda\bI_n \right)^\inv \left(\alpha\by_1 + (1-\alpha)\by_2 \right)
\end{align}
by Woodbury's matrix identity and definition of $\bK$. This solution can also be seen as a direct application of the Representer Theorem \citep{scholkopf2001generalized}. Let $\by^{(0)} \eqdef \by$, i.e. the original targets, and recursively define for the steps $\tau \geq 1$,
\begin{align}
    \hat{\bbeta}^{(\tau)} &\eqdef \argmin_{\bbeta} \L^{\mathrm{distill}}(\bbeta, \by, \by^{(\tau-1)}) \label{eq:beta_minimizer} \\
    &= \varphi(\bX)^\trans \left(\bK + \lambda\bI_n \right)^\inv \left(\alpha^{(\tau)}\by + (1-\alpha^{(\tau)})\by^{(\tau-1)} \right), \nonumber \\
    f(\bx, \hat{\bbeta}^{(\tau)}) &\eqdef \varphi(\bx)^\trans\hat{\bbeta}^{(\tau)} \label{eq:f_tau}\\
    &= \kappa(\bx, \bX)^\trans\left(\bK + \lambda\bI_n \right)^\inv \left(\alpha^{(\tau)}\by + (1-\alpha^{(\tau)})\by^{(\tau-1)} \right), \nonumber\\
    \by^{(\tau)} &\eqdef f(\bX, \hat{\bbeta}^{(\tau)}) \label{eq:y_tau},
\end{align}
for fixed $\alpha^{(\tau)} \in [0,1]$.
Notice, the initial step ($\tau = 1$) corresponds to standard training by definition and as such is independent of $\alpha^{(1)}$. Self-distillation treats the weighted average of the predictions, $\by^{(1)}$, from this initial model on $\bX$, and the ground-truth targets, $\by$ as targets. This procedure is repeated as defined in \eqref{eq:beta_minimizer}-\eqref{eq:y_tau} and we obtain the self-distillation procedure as illustrated in Figure \ref{fig:self_distill}. Note, the special cases $\alpha^{(\tau)} = 0$ and $\alpha^{(\tau)} = 1$ correspond to merely training on the predictions from the previous step, and only training on the original targets, respectively. Thus, $\alpha^{(\tau)} = 1$ is usually not of interest, as the solution is equal to a classical kernel ridge regression, and self-distillation plays no role in this scenario. We will often consider the special case of equal weights, $\alpha^{(2)} = \dots = \alpha^{(\tau)} = \alpha$, and if $\alpha = 0$ this corresponds to the setting investigated in \citet{mobahi2020selfdistillation} in a slightly different setup. Thus, some of the following results can be seen as a generalization of \citet{mobahi2020selfdistillation} to step-wise and non-zero $\alpha$.

\section{Main Results}\label{sec:main_results}
In this section we present our main results for finitely and infinitely many distillation steps along with a closed form solution for the optimal $\alpha^{(\tau)}$ as well as an illustrative example highlighting the effect of the chosen sequence of $(\alpha^{(t)})$ on the solutions.

\subsection{Finite Self-Distillation Steps}
Our first result, which follows from straightforward computations, states that the predictions obtained after any finite number of distillation steps can be expressed directly as a function of $\by$ and the kernel matrix $\bK$ calculated at the initial fit ($\tau = 1$).
 
\begin{theorem}\label{thm:unroll_Yt}
Let $\by^{(\tau)}, \hat{\bbeta}^{(\tau)}$, and $f(\cdot, \hat{\bbeta}^{(\tau)})$ be defined as above. Fix $\alpha^{(2)}, \dots, \alpha^{(\tau)} \in [0, 1)$, and let $\eta(i, \tau) \eqdef \prod_{j=i}^{\tau} \left(1-\alpha^{(j)} \right)$, then for $\tau \geq 1$, we have that
\begin{align}
    &\by^{(\tau)} = \left( \sum_{i=2}^{\tau} \alpha^{(i)} \eta(i+1, \tau) \left(\bK\left(\bK + \lambda \bI_n\right)^{-1}\right)^{\tau-i+1} + \eta(2, \tau)\left(\bK\left(\bK + \lambda \bI_n\right)^{-1}\right)^{\tau} \right) \by, \label{eq:recurrent_y} \\
    &f(\bx, \hat{\bbeta}^{(\tau)}) = \alpha^{(\tau)} f(\bx, \hat{\bbeta}^{(1)}) + (1-\alpha^{(\tau)})f(\bx, \hat{\bbeta}^{(\tau)}_{\alpha=0}) \label{eq:recurrent_f}
\end{align}
for any $\bx \in \R^{d}$, where $\hat{\bbeta}^{(\tau)}_{\alpha=0}$ is the minimizer in \eqref{eq:beta_minimizer} with $\alpha^{(\tau)} = 0$.
\end{theorem}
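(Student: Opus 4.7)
My plan is to work directly from the closed-form minimizer in \eqref{eq:beta_minimizer}. Introduce the shorthand $\bM \eqdef \bK(\bK+\lambda\bI_n)^\inv$, so that the entire self-distillation reduces to the scalar-coefficient recursion
\begin{align*}
\by^{(\tau)} = \bM\bigl(\alpha\by + (1-\alpha)\by^{(\tau-1)}\bigr), \qquad \by^{(0)} = \by,
\end{align*}
which follows by premultiplying \eqref{eq:beta_minimizer} with $\varphi(\bX)$. Both claims then become linear-algebraic consequences of this recursion.

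For the identity \eqref{eq:recurrent_y}, I would proceed by induction on $\tau$. The base case $\tau=1$ gives $\by^{(1)} = \bM\by$, while the right-hand side of \eqref{eq:recurrent_y} has an empty sum and collapses to $(1-\alpha)^{0}\bM^{1}\by = \bM\by$. For the inductive step I substitute the hypothesis for $\by^{(\tau-1)}$ into the recursion and collect terms. The three bookkeeping observations are: (i) $(1-\alpha)\bM \cdot \tfrac{\alpha}{1-\alpha}((1-\alpha)\bM)^{i} = \tfrac{\alpha}{1-\alpha}((1-\alpha)\bM)^{i+1}$, so the inductive sum shifts its index by one; (ii) the stand-alone $\alpha\bM\by$ contribution equals $\tfrac{\alpha}{1-\alpha}((1-\alpha)\bM)^{1}\by$ and precisely fills the vacated $i=1$ slot; (iii) $(1-\alpha)\bM \cdot (1-\alpha)^{\tau-2}\bM^{\tau-1} = (1-\alpha)^{\tau-1}\bM^{\tau}$ produces the trailing term. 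Combining the three reconstructs the target form at step $\tau$.

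For \eqref{eq:recurrent_f} I would exploit linearity of the closed form in the composite target $\alpha\by + (1-\alpha)\by^{(\tau-1)}$. Splitting along the convex combination,
\begin{align*}
\hat{\bbeta}^{(\tau)} = \alpha\,\varphi(\bX)^\trans(\bK+\lambda\bI_n)^\inv\by \;+\; (1-\alpha)\,\varphi(\bX)^\trans(\bK+\lambda\bI_n)^\inv\by^{(\tau-1)}.
\end{align*}
The first summand equals $\alpha\hat{\bbeta}^{(1)}$ because at $\tau=1$ the self-distillation target collapses to $\by$, while the second summand is exactly $(1-\alpha)\hat{\bbeta}^{(\tau)}_{\alpha=0}$ by the prescription of setting $\alpha=0$ inside \eqref{eq:beta_minimizer}. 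Premultiplying by $\varphi(\bx)^\trans$ and invoking \eqref{eq:f_tau} transfers the decomposition from parameters to predictions.

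The main obstacle is the sum bookkeeping in the inductive step for \eqref{eq:recurrent_y}, i.e.\ keeping the index shift, the absorption of $\alpha\bM\by$ as the $i=1$ summand, and the trailing power of $\bM$ all simultaneously aligned with the target expression. Once the shorthand $\bM$ is in place, nothing deeper than linearity of the kernel-ridge solution is needed.
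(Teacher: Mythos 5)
Your proposal is correct and follows essentially the same route as the paper's own proof: an induction on $\tau$ with exactly the index-shift and absorption of the $\alpha\bM\by$ term into the $i=1$ slot for \eqref{eq:recurrent_y}, and linearity of the ridge solution in the composite target for \eqref{eq:recurrent_f}. No gaps.
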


Since \eqref{eq:recurrent_y} and \eqref{eq:recurrent_f} are expressed only in terms of $\bK$, $(\bK + \lambda \bI_n)^\inv$, and $\kappa(\bx, \bX)$ we are able to calculate the predictions for the training data as well as for any $\bx \in \R^{d}$ based merely on the initial fit ($\tau = 1$) without the need for any additional fits. Hence, despite the calculations of $\bK$, $\kappa(\bx, \bX)$, and especially $(\bK + \lambda \bI_n)^\inv$ being (potentially) highly computationally demanding, when obtained, we can calculate any distillation step directly by the equations in Theorem \ref{thm:unroll_Yt}. Furthermore, predictions at step $\tau$ can be seen as a weighted combination of two classical ridge regression solutions, based on the original targets and the predicted targets from step $\tau-1$, respectively. However, choosing appropriate $\alpha^{(t)}$ for $t = 2, \dots, \tau$ is non-trivial. We explore these dynamics in Section \ref{sec:optimal_alpha} and \ref{sec:infinite_steps}. First, we use Theorem \ref{thm:unroll_Yt} to analyse the regularization that self-distillation progressively impose on the solutions.

\subsection{Effective Sparsification of Self-Distillation Solutions}\label{sec:sparsification}
We now show that we can represent the solutions as a weighted sum of basis functions, and that this basis sparsifies when we increase $\tau$, but also that the amount of sparsification depends on the choice of $\alpha$. A similar sparsification result for the special case of fixed $\alpha^{(\tau)} = 0$ for $\tau \geq 1$ was proved in \citet{mobahi2020selfdistillation}, and in particular, our \eqref{eq:fx_basis} generalizes equation (47) in their paper.

Using the spectral decomposition of the symmetric matrix $\bK$ we write $\bK = \bV \bD \bV^\trans$, where $\bV \in \R^{n \times n}$ is an orthogonal matrix with the eigenvectors of $\bK$ as rows and $\bD \in \R^{n \times n}$ is a non-negative diagonal matrix with the associated eigenvalues in the diagonal. Inserting the diagonalization yields
\begin{align}
    \bK(\bK + \lambda\bI_n)^{-1} &= \bV\bD\bV^\trans(\bV\bD\bV^\trans + \lambda\bI_n)^{-1} \\
    &= \bV \bD \left(\bD + \lambda\bI_n\right)^{-1}\bV^\trans, \label{eq:kernel_with_SVD}
\end{align}
where $\lambda > 0$.
By straightforward calculations using \eqref{eq:recurrent_y} and \eqref{eq:kernel_with_SVD} we have
\begin{align}\label{eq:Y_from_B}
    \by^{(\tau)} &= \bV \bB^{(\tau)} \bV^{\trans} \by, \quad \text{where}\\
    \bB^{(\tau)} &\eqdef \sum_{i=2}^{\tau} \alpha^{(i)} \eta(i+1, \tau) \bA^{\tau-i+1} + \eta(2, \tau)\bA^{\tau}, \quad \text{and} \quad \bA \eqdef \bD(\bD + \lambda\bI_n)^{-1} \label{eq:Bt},
\end{align}
and $\bA$, $\bB^{(\tau)} \in \R^{n \times n}$ are diagonal matrices for any $\tau$.
Furthermore, by \eqref{eq:Y_from_B} the only part of the solution depending on $\tau$ is the diagonal matrix, $\bB^{(\tau)}$, and in the following we show how $\bB^{(\tau)}$ determines the effective sparsification of the solution $f(\cdot, \hat{\bbeta}^{(\tau)})$.

\begin{lemma}\label{lem:recurrent_B}
Let $\bB^{(\tau)}$, and $\bA$ be defined as above, and let $\bB^{(0)} \eqdef \bI_n$. Then we can express $\bB^{(\tau)}$ recursively as
\begin{align}\label{eq:recurrent_B}
    \bB^{(\tau)} = \bA\left((1-\alpha^{(\tau)})\bB^{(\tau-1)} + \alpha^{(\tau)} \bI_n\right),
\end{align}
and $[\bB^{(\tau)}]_{k,k} \in [0,1]$ is (strictly) decreasing in $\tau$ for all $k \in [n]$ and $\tau \geq 1$ if $\alpha^{(2)} = \dots = \alpha^{(\tau)} = \alpha$.
\end{lemma}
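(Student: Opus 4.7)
The plan is to prove the two parts of the lemma in sequence, first the recursion and then the monotonicity and range properties, working entrywise since everything is diagonal.

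For the recursion \eqref{eq:recurrent_B}, I would just substitute the closed-form definition of $\bB^{(\tau-1)}$ into the right-hand side and simplify. Expanding,
\begin{align*}
\bA\bigl((1-\alpha)\bB^{(\tau-1)} + \alpha\bI_n\bigr)
&= \bA\left((1-\alpha)\left[\tfrac{\alpha}{1-\alpha}\sum_{i=1}^{\tau-2}(1-\alpha)^i\bA^i + (1-\alpha)^{\tau-2}\bA^{\tau-1}\right] + \alpha\bI_n\right) \\
&= \alpha\bA + \tfrac{\alpha}{1-\alpha}\sum_{i=2}^{\tau-1}(1-\alpha)^i\bA^i + (1-\alpha)^{\tau-1}\bA^{\tau},
\end{align*}
and recognizing $\alpha\bA = \tfrac{\alpha}{1-\alpha}(1-\alpha)^1\bA^1$ folds the first term into the sum and yields $\bB^{(\tau)}$. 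A small base-case check that $\bA((1-\alpha)\bI_n + \alpha\bI_n) = \bA = \bB^{(1)}$ handles $\tau=1$.

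For the range and monotonicity, I would pass to diagonal entries. Writing $a_k \eqdef [\bA]_{k,k} = d_k/(d_k+\lambda)$ (which lies in $(0,1]$, with $a_k<1$ precisely when $\lambda>0$, since $\bD$ is positive diagonal) and $b_k^{(\tau)} \eqdef [\bB^{(\tau)}]_{k,k}$, the recursion becomes the scalar identity
\begin{equation*}
b_k^{(\tau)} = a_k\bigl((1-\alpha) b_k^{(\tau-1)} + \alpha\bigr).
\end{equation*}
Containment in $[0,1]$ follows by induction: $b_k^{(0)}=1$, and if $b_k^{(\tau-1)}\in[0,1]$, then $(1-\alpha)b_k^{(\tau-1)}+\alpha\in[\alpha,1]\subseteq[0,1]$, so $b_k^{(\tau)}\in[0,a_k]\subseteq[0,1]$.

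For monotonicity I would take successive differences. Subtracting two consecutive applications of the scalar recursion gives
\begin{equation*}
b_k^{(\tau+1)} - b_k^{(\tau)} = a_k(1-\alpha)\bigl(b_k^{(\tau)} - b_k^{(\tau-1)}\bigr), \qquad \tau\geq 1,
\end{equation*}
so the sign of the difference is preserved (and its magnitude shrinks by factor $a_k(1-\alpha)\in[0,1)$). The base case $b_k^{(1)} - b_k^{(0)} = a_k - 1 \leq 0$ is $\leq 0$ in general and $<0$ exactly when $\lambda > 0$; since $a_k(1-\alpha)>0$ (recall $\alpha\in[0,1)$ from Theorem \ref{thm:unroll_Yt} and $a_k>0$ because $d_k>0$), iterating the difference recursion shows $b_k^{(\tau)}$ is decreasing in $\tau$, strictly so for all $\tau\geq 1$ iff $\lambda>0$.

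The main obstacle is essentially bookkeeping: making sure the shift of summation index in the recursion check is clean, and keeping the two edge cases ($\lambda=0$ yielding $a_k=1$, and $\alpha=1$ freezing the recursion) clearly separated so that the "strictly" in the statement is correctly tied to $\lambda>0$ under the running assumption $\alpha<1$.
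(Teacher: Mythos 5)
Your proof is correct and follows essentially the same route as the paper: the recursion is verified by substituting the closed form of $\bB^{(\tau-1)}$ and re-indexing the sum, and monotonicity is propagated back to the base case $\bB^{(1)} = \bA \leq \bI_n = \bB^{(0)}$ (strict iff $\lambda > 0$) using the fact that the scalar map $x \mapsto a_k((1-\alpha)x+\alpha)$ preserves order; your difference recursion is just an explicit form of the paper's chain of equivalences. The only addition is your explicit induction for the containment $[\bB^{(\tau)}]_{k,k} \in [0,1]$, which the paper leaves implicit.
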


Similarly to \eqref{eq:Y_from_B}, if we use Lemma \ref{lem:recurrent_B} and Theorem \ref{thm:unroll_Yt}, we can show that for any $\bx \in \R^p$
\begin{align}
    f(\bx, \hat{\bbeta}^{(\tau)}) &= \kappa(\bx, \bX)^\trans \bV \bD^\inv \bB^{(\tau)} \bV^\trans \by \nonumber \\
    &= \bp(\bx)^\trans \bB^{(\tau)} \bz, \quad \text{where} \label{eq:fx_basis}\\
    \bp(\bx) &\eqdef \bD^\inv \bV^\trans \kappa(\bx, \bX), \quad \text{and} \quad \bz \eqdef \bV^\trans \by. \nonumber
\end{align}
Thus, the solution $f(\cdot, \hat{\bbeta}^{(\tau)})$ can be represented as a weighted sum of some basis functions, where the basis functions are the components of the orthogonally transformed and scaled basis $\bp(\bx)$, and $\bz$ is an orthogonally transformed vector of targets.

Now assume $\alpha^{(2)} = \dots = \alpha^{(\tau)} = \alpha$ for any $\tau \geq 2$ for the remaining of this section. In the following we show how $\bB^{(\tau)}$ effectively sparsifies with each distillation step when $\alpha < 1$, and thus also effectively sparsifies the solution $f(\cdot, \hat{\bbeta}^{(\tau)})$.
Lemma \ref{lem:recurrent_B} not only provides a recursive formula for $\bB^{(\tau)}$, but also shows that each diagonal element of $\bB^{(\tau)}$ is in $[0,1]$ and is strictly decreasing in $\tau$, which in turn implies that the self-distillation procedure progressively shrinks the coefficients of the basis functions. Using Lemma \ref{lem:recurrent_B} we can now show, that not only does $\bB^{(\tau)}$ decrease in $\tau$, smaller elements of $\bB^{(\tau)}$ shrink faster than larger elements, as we elaborate on below the theorem.

\begin{theorem}\label{thm:B_sparsification}
For any pair of diagonals of $\bD$, i.e. $d_k$ and $d_j$, where $d_k > d_j$, we have for all $\tau \geq 1$,
\begin{align}\label{eq:B_diagonal_evolution}
    \frac{[\bB^{(\tau)}]_{k,k}}{[\bB^{(\tau)}]_{j,j}} &= \begin{cases} \frac{1 + \frac{\lambda}{d_j}}{1 + \frac{\lambda}{d_k}}, &\text{for } \alpha = 1, \\ \left(\frac{1 + \frac{\lambda}{d_j}}{1 + \frac{\lambda}{d_k}}\right)^\tau, &\text{for } \alpha = 0, \end{cases}
\end{align}
and if we let $\mathrm{sgn}(\cdot)$ denote the sign function\footnote{Note, we use the definition of $\mathrm{sgn}(\cdot)$ where $\mathrm{sgn}(0) \eqdef 0$.}, then for $\alpha \in (0, 1)$ we have that
\begin{align}
    &\mathrm{sgn}\left(\frac{[\bB^{(\tau)}]_{k,k}}{[\bB^{(\tau)}]_{j,j}} - \frac{[\bB^{(\tau-1)}]_{k,k}}{[\bB^{(\tau-1)}]_{j,j}} \right) \nonumber\\
    &\quad= \mathrm{sgn}\left(\left( \left(\frac{[\bB^{(\tau-1)}]_{k,k}}{[\bB^{(\tau-1)}]_{j,j}} - \frac{[\bA]_{k,k}}{[\bA]_{j,j}} \right)\frac{[\bA]_{j,j}}{[\bB^{(\tau-1)}]_{k,k} ([\bA]_{k,k} - [\bA]_{j,j})} + 1\right)^\inv - \alpha \right).\label{eq:B_sign}
\end{align}
\end{theorem}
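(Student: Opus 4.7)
The plan is to exploit the fact that $\bA$ and $\bB^{(\tau)}$ are simultaneously diagonal, so every statement reduces to scalar recursions on the diagonal entries $a_k := [\bA]_{k,k} = d_k/(d_k+\lambda)$ and $b_k^{(\tau)} := [\bB^{(\tau)}]_{k,k}$. Lemma~\ref{lem:recurrent_B} gives
\[ b_k^{(\tau)} = a_k\bigl((1-\alpha)b_k^{(\tau-1)} + \alpha\bigr), \qquad b_k^{(0)} = 1, \]
and the one-line simplification $a_k/a_j = (1+\lambda/d_j)/(1+\lambda/d_k)$ will be used throughout. The proof then splits naturally: first dispose of the two boundary values $\alpha \in \{0,1\}$ using the closed form of the recursion, then handle $\alpha \in (0,1)$ by substituting the recursion into the ratio and recognizing the resulting sign pattern.

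For $\alpha = 1$ the recursion collapses to $b_k^{(\tau)} = a_k$ for every $\tau \ge 1$, so $b_k^{(\tau)}/b_j^{(\tau)} = a_k/a_j$ is constant. For $\alpha = 0$ the recursion is $b_k^{(\tau)} = a_k b_k^{(\tau-1)}$, iterating to $b_k^{(\tau)} = a_k^\tau$ and giving $b_k^{(\tau)}/b_j^{(\tau)} = (a_k/a_j)^\tau$. Substituting the expression for $a_k/a_j$ reproduces the two branches of \eqref{eq:B_diagonal_evolution}.

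For $\alpha \in (0, 1)$, write $u := b_k^{(\tau-1)}$ and $v := b_j^{(\tau-1)}$, substitute the recursion, and cross-multiply to obtain
\[ b_k^{(\tau)} b_j^{(\tau-1)} - b_k^{(\tau-1)} b_j^{(\tau)} = (1-\alpha)uv(a_k - a_j) + \alpha(a_k v - a_j u) =: N. \]
Since $b_j^{(\tau)} b_j^{(\tau-1)} > 0$ by Lemma~\ref{lem:recurrent_B}, the sign of $r^{(\tau)} - r^{(\tau-1)}$ agrees with that of $N$. The algebraic heart of the argument is the factorization $N = uv(a_k - a_j)(1 - \alpha F)$, where $F$ equals the expression inside $(\cdot)^{-1}$ in \eqref{eq:B_sign}; one verifies this by directly expanding $uv(a_k-a_j)(1-\alpha F)$ and collecting terms. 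Because $uv(a_k-a_j) > 0$, this already yields $\mathrm{sgn}(N) = \mathrm{sgn}(1 - \alpha F)$, and provided $F > 0$ we may multiply inside $\mathrm{sgn}(\cdot)$ by $F^{-1}$ without changing the sign, concluding $\mathrm{sgn}(N) = \mathrm{sgn}(F^{-1} - \alpha)$, which is the theorem.

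The main obstacle is justifying $F > 0$ so the last manipulation is legal. I would handle this by induction on $\tau$: the base case $\tau = 2$ uses $\bB^{(1)} = \bA$, giving $u = a_k$, $v = a_j$ and hence, by direct substitution, $F = 1$. For $\tau \ge 3$ it suffices to show $b_k^{(\tau-1)}/b_j^{(\tau-1)} > a_k/a_j$ (equivalently $ua_j - va_k > 0$), from which $F > 1$ follows immediately. This ratio inequality propagates inductively because
\[ \frac{b_k^{(\tau)}}{b_j^{(\tau)}} = \frac{a_k}{a_j} \cdot \frac{(1-\alpha)u + \alpha}{(1-\alpha)v + \alpha}, \]
and the second factor exceeds $1$ whenever $u > v$, which is itself preserved by the recursion since $a_k > a_j$ and both entries start at $1$. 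With $F > 0$ in hand the sign computation above closes the proof, and the remainder is pure bookkeeping.
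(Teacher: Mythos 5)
Your proof is correct, and while it starts from the same reduction as the paper (everything is diagonal, so the boundary cases $\alpha\in\{0,1\}$ follow from the closed forms $\bB^{(\tau)}=\bA$ and $\bB^{(\tau)}=\bA^{\tau}$), your treatment of the case $\alpha\in(0,1)$ is genuinely different and, in one respect, more complete. The paper argues by showing that the condition ``left-hand sign is zero'' and the condition ``right-hand sign is zero'' both reduce to the same equation $\frac{\bB_k}{\bB_j}=\frac{\bA_k}{\bA_j}\bigl(\frac{1-\alpha}{\alpha}\bB_k+1\bigr)-\frac{1-\alpha}{\alpha}\bB_k$, and then asserts that ``similar calculations with $>$ and $<$'' finish the proof. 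Your route instead factorizes the cross-difference $N=b_k^{(\tau)}b_j^{(\tau-1)}-b_k^{(\tau-1)}b_j^{(\tau)}=uv(a_k-a_j)(1-\alpha F)$ directly (I checked the expansion; it is right), which immediately gives $\mathrm{sgn}(N)=\mathrm{sgn}(1-\alpha F)$. Crucially, you then isolate the one step both arguments actually depend on: passing from $\mathrm{sgn}(1-\alpha F)$ to $\mathrm{sgn}(F^{\inv}-\alpha)$ requires $F>0$, since for $F<0$ the two expressions have opposite signs for every $\alpha\in(0,1)$. The paper's inequality-chasing silently needs the same positivity (dividing an inequality by a quantity of unknown sign is not legal), and your induction showing $F=1$ at $\tau=2$ and $F>1$ for $\tau\geq 3$ (via $b_k^{(\tau-1)}/b_j^{(\tau-1)}>a_k/a_j$, itself propagated by the recursion) supplies exactly the missing justification. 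One small observation your argument surfaces: at $\tau=1$ one has $\bB^{(0)}=\bI_n$, which makes $F=0$ and the expression $F^{\inv}$ in \eqref{eq:B_sign} undefined, so the sign identity really only makes literal sense for $\tau\geq 2$; your base case at $\tau=2$ is therefore the right place to start, and it would be worth flagging this restriction explicitly.
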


If we consider a pair of diagonals of $\bD$, where $d_k > d_j$, then for $\alpha = 0$, the fraction $\frac{[\bB^{(\tau)}]_{k,k}}{[\bB^{(\tau)}]_{j,j}}$ is strictly increasing in $\tau$, due to the r.h.s. of \eqref{eq:B_diagonal_evolution} inside the parenthesis being strictly larger than $1$. Hence, the diagonals corresponding to smaller eigenvalues shrink faster than the larger ones as $\tau$ increases. However, for $\alpha \in (0,1)$ we can not ensure this behaviour, but at step $\tau$ we are able to predict the behaviour at step $\tau + 1$, by using \eqref{eq:B_sign}. Thus, when we include the ground-truth targets in our distillation procedure we do not consistently increase the regularization with each distillation step, but can potentially obtain a solution which does not sparsify any further. We now turn our attention to the question of how to pick the $\alpha^{(\tau)}$'s in an optimal manner, and find that it can be done if we relax the condition that the weights are restricted to the interval $[0,1]$.

\subsection{Closed Form Optimal Weighting Parameter}\label{sec:optimal_alpha}
Recall, $\tilde{\bX} \in \R^{m \times d}$ is the matrix of validation inputs and $\tilde{\by} \in \R^{m}$ the vector of validation targets. If we allow $\alpha^{(\tau)} \in \R$, we can find an \textit{optimal} $\alpha^{(\tau)}$ (which is a non-trivial function of $\lambda$) at each step $\tau$, denoted by $\alpha^{\star(\tau)}$.\footnote{If $\alpha^{\star(\tau)} \notin [0,1]$, the sign of either the first or second term of \eqref{eq:distill_objective} becomes negative, indicating either too strong or weak regularization of the previous distillation step, and one might fear this affects distillation performance. However, simply clipping of $\alpha^{\star(\tau)}$ to be in $[0,1]$ alleviates this, at the cost of requiring a larger $\tau$.} Here, \textit{optimal} denotes the value for which the validation MSE is minimized.

\begin{theorem}
Fix $\tau \geq 2$, $\lambda > 0$ and $\alpha^{(2)}, \dots, \alpha^{(\tau-1)} \in \R$, then
\begin{align}\label{eq:optimal_alpha}
    \alpha^{\star(\tau)} = \argmin_{\alpha^{(\tau)} \in \R} \norm{\tilde{\by} - f(\tilde{\bX}, \hat{\bbeta}^{(\tau)})}_2^2 = 1 - \frac{\left(\tilde{\by}^{(\tau)}_{\alpha=0} - \tilde{\by}^{(1)}\right)^\trans \left(\tilde{\by} - \tilde{\by}^{(1)}\right)}{\norm{\tilde{\by}^{(\tau)}_{\alpha=0} - \tilde{\by}^{(1)}}_2^2}
\end{align}
where $\tilde{\by}^{(1)} = f(\tilde{\bX}, \hat{\bbeta}^{(1)})$, and $\tilde{\by}^{(\tau)}_{\alpha=0} = f(\tilde{\bX}, \hat{\bbeta}^{(\tau)}_{\alpha=0})$.
\end{theorem}

Since neither $\tilde{\by}^{(1)}$ nor $\tilde{\by}^{(\tau)}_{\alpha=0}$ depend on the choice of $\alpha^{(\tau)}$, we can calculate $\alpha^{\star(\tau)}$ recursively as presented in Algorithm \ref{alg:optimal_alpha_procedure}, where $\alpha^{\star(\tau)}$ has the closed form in \eqref{eq:optimal_alpha}. In combination with the diagonalization results of Section \ref{sec:sparsification} we can efficiently calculate the solutions. This should be compared to performing grid-search for $\alpha$ with $g$ equidistant values on $[0,1]$ in order to approximate the optimal $\alpha$, which requires $g(\tau-1) + 1$ model fits if one uses the same $\alpha$ for each sequence of $\tau \geq 2$ steps ($g^{(\tau-1)}$ if $\alpha$ is not fixed across distillation steps). However, by Algorithm \ref{alg:optimal_alpha_procedure} it is sufficient to perform $2(\tau-1)+1$ model fits, and obtain the exact optimal value instead of an approximated value. In Section \ref{sec:optimal_alpha_DL} we apply Algorithm \ref{alg:optimal_alpha_procedure} to approximate $\alpha^{\star(\tau)}$ in a deep learning setting.

\begin{algorithm}[H]
        Calculate $\hat{\bbeta}^{(1)}$ from \eqref{eq:beta_minimizer} (with any $\alpha^{(1)}$)\;
        Calculate $\tilde{\by}^{(1)} = f(\tilde{\bX}, \hat{\bbeta}^{(1)})$\;
        \For{$t = 2$ \KwTo $\tau$}{
            \nosemic Calculate $\hat{\bbeta}^{(t)}_{\alpha=0}$ from \eqref{eq:beta_minimizer} and $\tilde{\by}^{(t)}_{\alpha=0} = f(\tilde{\bX}, \hat{\bbeta}^{(t)}_{\alpha=0})$\;
            \nosemic Solve: $\alpha^{\star(t)} = \argmin\limits_{\alpha \in \R} \norm{\tilde{\by} - \left(\alpha\tilde{\by}^{(1)} + (1-\alpha) \tilde{\by}^{(t)}_{\alpha=0} \right)}^2_2$\;
            \nosemic Calculate $\hat{\bbeta}^{(t)}$ from \eqref{eq:beta_minimizer} with $\alpha^{\star(t)}$\;
        }
    \caption{Calculate $\hat{\bbeta}^{(\tau)}$ and $\alpha^{\star(\tau)}$ for $\tau \geq 2$.}
    \label{alg:optimal_alpha_procedure}
\end{algorithm}

\subsection{Infinite Number of Self-Distillation Steps}\label{sec:infinite_steps}
We now prove that if we were to perform an infinite number of distillations steps ($\tau \to \infty$) with a fixed $\alpha$ (i.e. $\alpha^{(2)} = \dots = \alpha^{(\tau)} = \alpha$) the solution would solve the classical kernel ridge regression problem, with an amplified regularization parameter (by $\alpha^\inv$) if $\alpha > 0$.
Observe that, when $\alpha = 0$ and $\tau \to \infty$, \eqref{eq:recurrent_y} and \eqref{eq:recurrent_f} yield that the predictions $\by^{(\infty)}$ and $f(\bx, \hat{\bbeta}^{(\infty)})$ collapse to the zero-solution for any $\bx \in \R^p$ as expected from \citet{mobahi2020selfdistillation}.

\begin{theorem}\label{thm:kernel_limit}
Let $\by^{(\tau)}, \hat{\bbeta}^{(\tau)}$, and $f(\cdot, \hat{\bbeta}^{(\tau)})$ be defined as above, and $\alpha \in (0,1]$, then the following limits hold
\begin{align}
    \by^{(\infty)} &\eqdef \lim_{\tau \to \infty} \by^{(\tau)} = \bK\left(\bK + \frac{\lambda}{\alpha}\bI_n \right)^{-1} \by \label{eq:Y_limit}\\
    f(\bx, \hat{\bbeta}^{(\infty)}) &\eqdef  \lim_{\tau \to \infty} f(\bx, \hat{\bbeta}^{(\tau)})
    = \alpha f(\bx, \hat{\bbeta}^{(1)}) + (1-\alpha)f(\bx, \hat{\bgamma}^{(\infty)}) \nonumber
\end{align}
where \eqref{eq:Y_limit} corresponds to \emph{classical} kernel ridge regression with amplified regularization parameter $\frac{\lambda}{\alpha}$, and we let $\hat{\bgamma}^{(\infty)}$ denote the kernel ridge regression parameter associated with solving another kernel ridge regression on the targets $\by^{(\infty)}$ with regularization parameter $\lambda$. Furthermore, the convergence $\lim_{\tau \to \infty} \by^{(\tau)}$ is of linear rate.
\end{theorem}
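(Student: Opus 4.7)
The plan is to pass $\tau\to\infty$ in the closed-form expression of Theorem~\ref{thm:unroll_Yt}, identify the limit with a standard kernel ridge regression via the commutativity of $\bK$ with $(\bK+\lambda\bI_n)^\inv$, and read off the rate directly from a Neumann-series estimate. Throughout, write $\bM \eqdef \bK(\bK+\lambda\bI_n)^\inv$, which is symmetric with eigenvalues $\mu_k = d_k/(d_k+\lambda) \in [0,1)$ (since $\lambda>0$); let $\mu_{\max} < 1$ denote its spectral radius. The case $\alpha = 1$ is trivial: by \eqref{eq:beta_minimizer} the recursion collapses to $\by^{(\tau)} = \bM\by$ for every $\tau \geq 1$, which coincides with $\bK(\bK+(\lambda/\alpha)\bI_n)^\inv\by$ at $\alpha=1$. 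The work is for $\alpha \in (0,1)$.

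For $\alpha \in (0,1)$, I start from \eqref{eq:recurrent_y} written as
\begin{align*}
\by^{(\tau)} = \frac{\alpha}{1-\alpha}\sum_{i=1}^{\tau-1}\bigl((1-\alpha)\bM\bigr)^{i}\by + (1-\alpha)^{\tau-1}\bM^{\tau}\by.
\end{align*}
The eigenvalues of $(1-\alpha)\bM$ lie in $[0, 1-\alpha)\subset[0,1)$, so the Neumann series $\sum_{i=0}^{\infty}((1-\alpha)\bM)^{i}$ converges to $\bigl(\bI_n - (1-\alpha)\bM\bigr)^\inv$ and the residual $(1-\alpha)^{\tau-1}\bM^\tau$ vanishes. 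Taking $\tau\to\infty$ yields $\by^{(\infty)} = \alpha\bM\bigl(\bI_n - (1-\alpha)\bM\bigr)^\inv\by$. I then simplify, using that every matrix in sight commutes with $\bK$: $\bI_n - (1-\alpha)\bM = (\alpha\bK + \lambda\bI_n)(\bK+\lambda\bI_n)^\inv$, hence $\bM\bigl(\bI_n - (1-\alpha)\bM\bigr)^\inv = \bK(\alpha\bK + \lambda\bI_n)^\inv$, and therefore $\by^{(\infty)} = \alpha\bK(\alpha\bK+\lambda\bI_n)^\inv\by = \bK\bigl(\bK + \tfrac{\lambda}{\alpha}\bI_n\bigr)^\inv\by$, which is \eqref{eq:Y_limit}.

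For the pointwise statement I apply \eqref{eq:recurrent_f} and pass to the limit term by term. Since $f(\bx,\hat{\bbeta}^{(\tau)}_{\alpha=0}) = \kappa(\bx,\bX)^\trans(\bK+\lambda\bI_n)^\inv\by^{(\tau-1)}$ is a bounded linear function of $\by^{(\tau-1)}$, convergence of $\by^{(\tau-1)}$ to $\by^{(\infty)}$ implies that it tends to $\kappa(\bx,\bX)^\trans(\bK+\lambda\bI_n)^\inv\by^{(\infty)} = f(\bx,\hat{\bgamma}^{(\infty)})$, where $\hat{\bgamma}^{(\infty)} \eqdef \varphi(\bX)^\trans(\bK+\lambda\bI_n)^\inv\by^{(\infty)}$ is precisely the kernel ridge regression parameter (with regularization $\lambda$) fitted to the limiting targets $\by^{(\infty)}$. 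Substituting into \eqref{eq:recurrent_f} gives the asserted decomposition.

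Linear convergence falls out of the same estimate: $\by^{(\tau)} - \by^{(\infty)}$ is the sum of the geometric tail $-\tfrac{\alpha}{1-\alpha}\sum_{i=\tau}^{\infty}((1-\alpha)\bM)^{i}\by$ and the residual $(1-\alpha)^{\tau-1}\bM^{\tau}\by$, each bounded in norm by a constant times $\bigl((1-\alpha)\mu_{\max}\bigr)^{\tau}$. I expect the only non-routine point to be the algebraic reshuffling that identifies $\alpha\bK(\alpha\bK+\lambda\bI_n)^\inv$ with the amplified-regularization ridge operator $\bK\bigl(\bK + (\lambda/\alpha)\bI_n\bigr)^\inv$; the commutativity with $\bK$ makes this a one-line eigenvalue check once the Neumann sum has been closed, but it is the step that gives the theorem its interpretation.
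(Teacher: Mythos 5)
Your proof is correct and follows essentially the same route as the paper's: sum the geometric series in the closed form of Theorem \ref{thm:unroll_Yt} (using that the eigenvalues of $(1-\alpha)\bK(\bK+\lambda\bI_n)^{-1}$ lie in $[0,1)$), simplify $\alpha\bK(\bK+\lambda\bI_n)^{-1}\left(\bI_n-(1-\alpha)\bK(\bK+\lambda\bI_n)^{-1}\right)^{-1}$ to $\bK\left(\bK+\tfrac{\lambda}{\alpha}\bI_n\right)^{-1}$, pass to the limit in \eqref{eq:recurrent_f} by continuity, and bound the tail geometrically for the linear rate. Your explicit treatment of $\alpha=1$ and your identification of $\hat{\bgamma}^{(\infty)}$ as the $\lambda$-regularized fit to the limiting targets $\by^{(\infty)}$ (which is what the paper's own algebra in fact produces) are minor refinements rather than a different argument.
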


If $\alpha > 0$, then by \eqref{eq:kernel_with_SVD} and Theorem \ref{thm:kernel_limit}, we have that $\by^{(\infty)} = \sum_{j=1}^p \bv_j \frac{d_j}{d_j + \frac{\lambda}{\alpha}} \bv_j^\trans \by$ and we shrink the eigenvectors with smallest eigenvalues, corresponding to the directions with least variance, the most. Furthermore, if $\alpha > 0$ the limiting solution is a non-zero kernel ridge regression with regularization parameter $\lambda/\alpha \geq \lambda$, causing the eigenvectors associated with the smallest eigenvalues to shrink even more than in the original solution.

Our results gives a theoretical explanation for why one should treat $\alpha^{(\tau)}$ as an adjustable hyperparameter to fine-tune the amount of regularization that self-distillation impose for a particular problem, and that it can be chosen in an optimal way for kernel ridge regression. In the following we provide an illustrative example, and in Section \ref{sec:optimal_alpha_DL} we estimate the optimal weighting parameter for deep learning using an adaptation of Algorithm \ref{alg:optimal_alpha_procedure}.

\subsection{Illustrative example}\label{sec:illustrative_example}
Consider the training dataset $\D$ where $\X = \{0, 0.1, \dots, 0.9, 1\}$ and $\Y = \{\sin(2\pi x) + \varepsilon \mid x \in \X\}$, and $\varepsilon$ is sampled from a zero-mean Gaussian random variable with standard deviation $0.5$. Let $\varphi$ be the Radial Basis Function kernel, i.e. $\kappa(\bx_i, \bx_j) = e^{-\gamma \norm{\bx_i - \bx_j}_2^2}$, where we choose $\gamma = \frac{1}{80}$, and let $\lambda = 0.2$ and consider the three cases; (\emph{a}) $\alpha = 0$, (\emph{b}) $\alpha = 0.25$, and (\emph{c}) step-wise optimal $\alpha^{\star(\tau)}$.

\begin{figure}[htbp]
    \centering
    \begin{subfigure}[b]{0.32\linewidth}
        \centering
        \includegraphics[width=\linewidth]{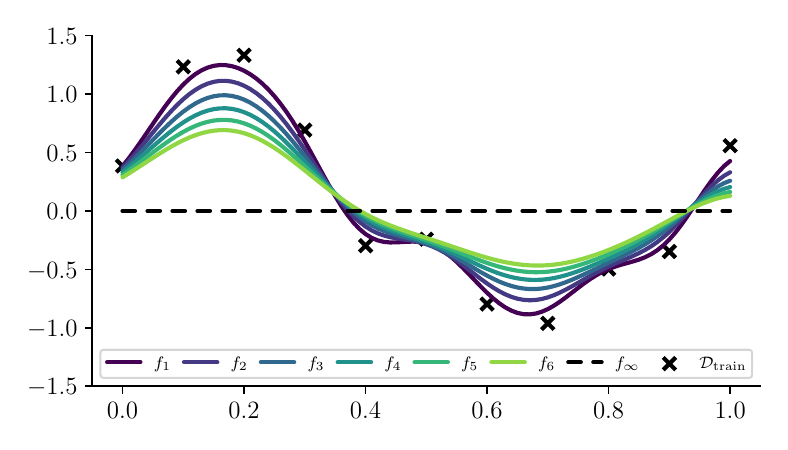}
        \caption{$\alpha = 0$}
        \label{fig:distill_no_GT}
    \end{subfigure}
    \hfill
    \begin{subfigure}[b]{0.32\linewidth}
        \centering
        \includegraphics[width=\linewidth]{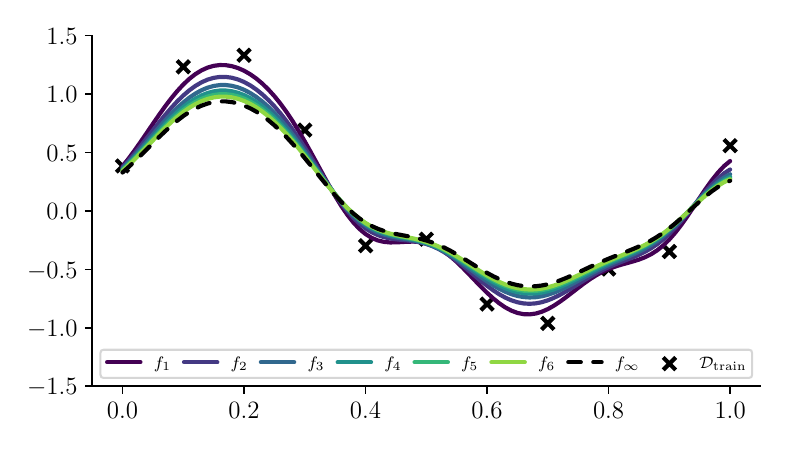}
        \caption{$\alpha = 0.25$}
        \label{fig:distill_GT}
    \end{subfigure}
    \hfill
    \begin{subfigure}[b]{0.32\linewidth}
        \centering
        \includegraphics[width=\linewidth]{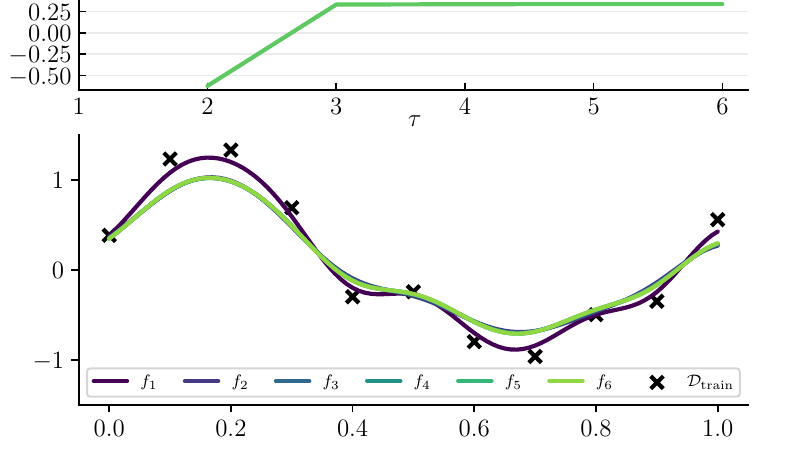}
        \caption{$\alpha^{\star(\tau)}$}
        \label{fig:distill_optimal}
    \end{subfigure}
    \caption{Six steps of self-distillation with (\subref{fig:distill_no_GT}) zero limiting solution (dashed), (\subref{fig:distill_GT}) non-zero limiting solution (dashed), and (\subref{fig:distill_optimal}) optimal step-wise $\alpha^{\star(\tau)}$. Training examples are represented with $\times$.}
    \label{fig:illustrative_distill}
\end{figure}

As illustrated in Figure \ref{fig:distill_no_GT} for case (\emph{a}), the regularization imposed by self-distillation initially improves the quality of the solution, but eventually overregularize and the solutions underfit the data, and will eventually converge to the zero-solution. Using $\alpha > 0$ (see Figure \ref{fig:distill_GT}), and more specifically $\alpha = 0.25$, reduce the imposed regularization and increases the stability of the distillation procedure; i.e. the solutions differ much less between each distillation step. This allows for a more dense exploration of solutions during iterated distillation steps, where increasing $\alpha$ reduces the difference between solutions from two consecutive steps, but also reduces the space of possible solutions as the limit, $f(\cdot, \hat{\bbeta}^{(\infty)})$, approaches the initial solution $f(\cdot, \hat{\bbeta}^{(1)})$ quickly.\footnote{As expected by Theorem \ref{thm:kernel_limit}, we experience a fast convergence to the limit; usually less than 10 iterations are sufficient to converge}. However, choosing the step-wise optimal $\alpha^{\star(\tau)}$ yields minuscule changes to the solution for $\tau > 2$, and a single step of distillation is effectively enough. Furthermore, for $\tau \geq 3$, all $\alpha^{\star(\tau)}$ are approximately equal, and the distillation procedure has reached an equilibrium.\footnote{If we clip $\alpha^{\star(\tau)}$ to be in $[0,1]$, the $\alpha^{\star(\tau)}$ converges at $\tau = 4$ rather than $\tau = 3$.}

As expected from Lemma \ref{lem:recurrent_B} and Theorem \ref{thm:B_sparsification}, Figure \ref{fig:illustrative_B} verifies that both in case (\emph{a}) and (\emph{b}), the diagonal of $\bB^{(\tau)}$ is decreasing in $\tau$ and the diagonal coordinates corresponding to smaller eigenvalues shrink faster than those corresponding to larger eigenvalues. Without loss of generality we can assume $d_1 < d_2 < \dots < d_n$, and for $k = 1, \dots n-1$ and any $\tau \geq 1$ define $R^{(\tau)}_k \eqdef \smash{ \frac{[\bB^{(\tau)}]_{k+1, k+1}}{[\bB^{(\tau)}]_{k, k}}}$. We expect $R^{(\tau)}_k$ to be strictly increasing in $\tau$ for all $k$ in case (\emph{a}), but for case (\emph{b}) we can make no such guarantee. Both of these properties are verified in Figure \ref{fig:illustrative_Rk}.

\begin{figure}[htbp]
    \centering
    \begin{subfigure}[b]{0.32\linewidth}
        \centering
        \includegraphics[width=\linewidth]{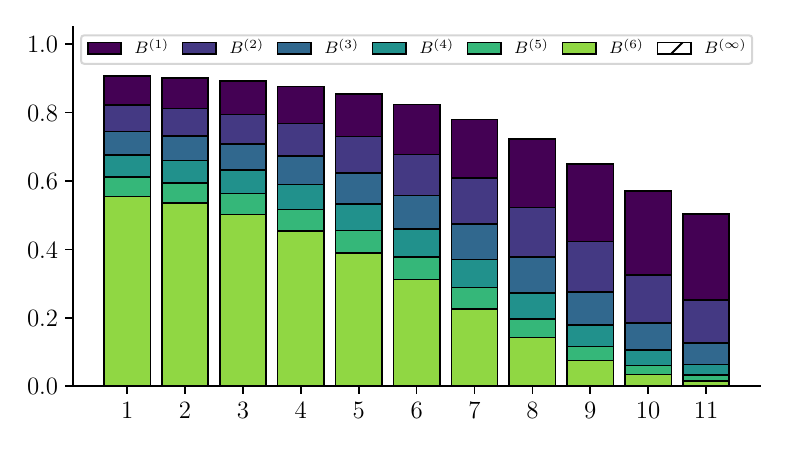}
        \caption{$\alpha = 0$}
        \label{fig:B_no_GT}
    \end{subfigure}
    \hfill
    \begin{subfigure}[b]{0.32\linewidth}
        \centering
        \includegraphics[width=\linewidth]{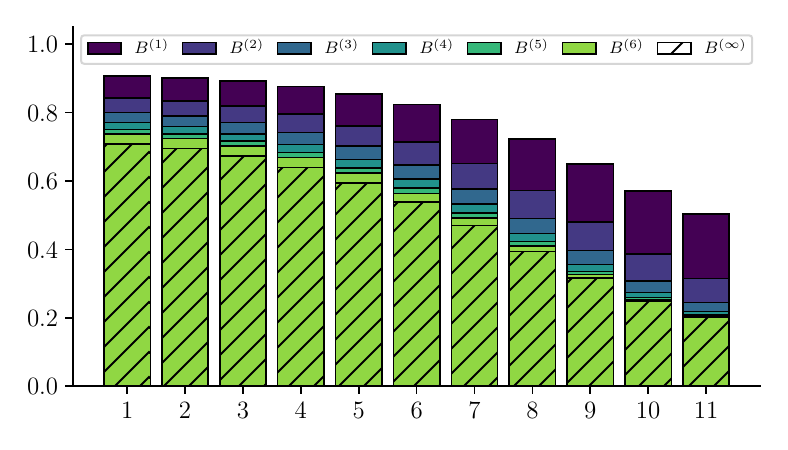}
        \caption{$\alpha = 0.25$}
        \label{fig:B_GT}
    \end{subfigure}
    \hfill
    \begin{subfigure}[b]{0.32\linewidth}
        \centering
        \includegraphics[width=\linewidth]{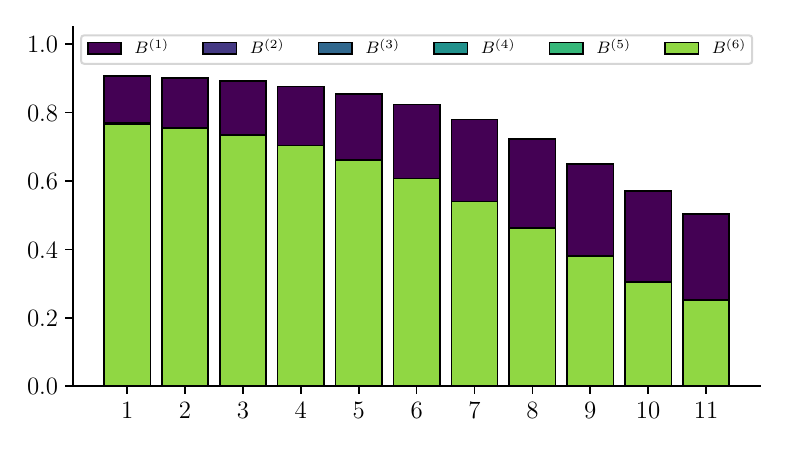}
        \caption{$\alpha^{\star(\tau)}$}
        \label{fig:B_optimal}
    \end{subfigure}
    \caption{Diagonal of $\bB^{(\tau)}$ for $\tau = 1, \dots, 6$ associated with Figure \ref{fig:illustrative_distill}. Note, the plots are overlaid, but since the diagonal of $\bB^{(\tau)}$ decrease in $\tau$, all values until convergence are visible. In (\subref{fig:B_no_GT}) we expect and observe strictly decreasing values in $\tau$ for all indices, until collapsing at $0$, but in (\subref{fig:B_GT}) and (\subref{fig:B_optimal}) the values converge to a non-zero limit.}
    \label{fig:illustrative_B}
\end{figure}

Finally, we observe that in case (\emph{a}), the values of $\bB^{(\tau)}$ shrink much faster than in case (\emph{b}), and eventually collapse to all zeros, whereas the latter is nearly converged after six iterations. Furthermore, case (\emph{a}) appear to obtain a more sparsified solution, as the smallest coordinates effectively diminishes, which is not true for case (\emph{b}). Furthermore, when directly comparing solutions from both cases with similar quality of fit, the solutions obtained with $\alpha = 0$ usually has smaller coordinates in $\bB^{(\tau)}$ than those obtained with larger values of $\alpha$.

\begin{figure}[htbp]
    \centering
    \begin{subfigure}[b]{0.32\linewidth}
        \centering
        \includegraphics[width=\linewidth]{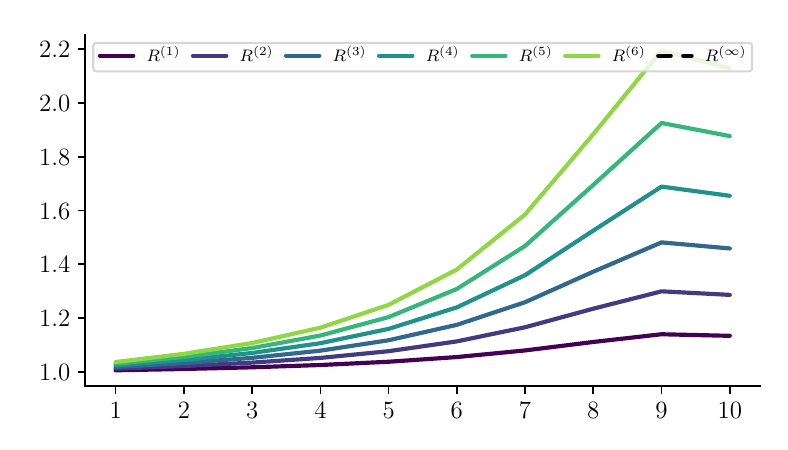}
        \caption{$\alpha = 0$}
        \label{fig:Rk_no_GT}
    \end{subfigure}
    \hfill
    \begin{subfigure}[b]{0.32\linewidth}
        \centering
        \includegraphics[width=\linewidth]{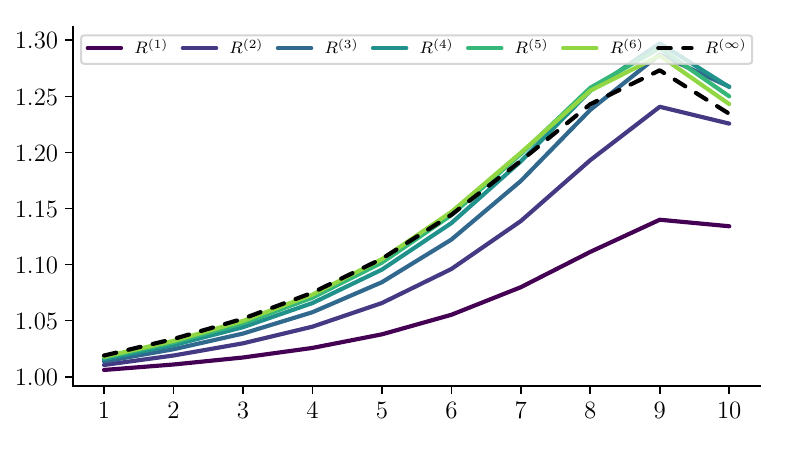}
        \caption{$\alpha = 0.25$}
        \label{fig:Rk_GT}
    \end{subfigure}
    \hfill
    \begin{subfigure}[b]{0.32\linewidth}
        \centering
        \includegraphics[width=\linewidth]{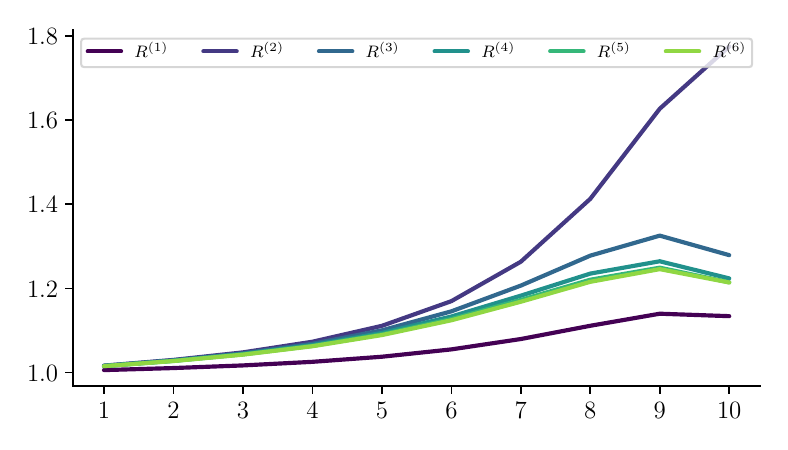}
        \caption{$\alpha^{\star(\tau)}$}
        \label{fig:Rk_optimal}
    \end{subfigure}
    \caption{Ratios, $R^{(\tau)}_k$ of the ordered diagonal of $\bB^{(\tau)}$ for all $\tau$. In (\subref{fig:Rk_no_GT}) we expect and observe strictly increasing values in $\tau$ for all $k$, but have no such guarantee in (\subref{fig:Rk_GT}) or (\subref{fig:Rk_optimal}). The x-axis corresponds to indices $k = 1, \dots, n-1$.}
    \label{fig:illustrative_Rk}
\end{figure}

\section{Approximate Optimal Weighting Parameter for Deep Learning}\label{sec:optimal_alpha_DL}

The following experiment aim at empirically evaluating the theoretical analysis above in a simple deep learning setting. In \eqref{eq:optimal_alpha} we find $\alpha^{\star(\tau)}$ on closed form when $f(\cdot, \hat{\bbeta}^{(\tau)})$ is a (self-distilled) kernel ridge regression. No closed form solution can be found for neural networks, but recent results show that (very) wide neural networks can be seen as kernel ridge regression solutions with the neural tangent kernel \citep{Jacot2018NeuralNetworks, Arora2019OnNet, Lee2019WideDescent, Lee2020GeneralizedNetworks}.

Thus, inspired by \eqref{eq:recurrent_f} we propose to estimate $\alpha^{\star(t)}$ for $t = 2,\dots,\tau$, denoted by $\hat{\alpha}^{(t)}$, for a neural network trained with self-distillation using an adapted Algorithm \ref{alg:optimal_alpha_procedure}. Let $\nn(\cdot, \btheta) \in \R^p$ be a neural network with vector of weights $\btheta$, and recursively for $\tau \geq 1$ let $\hat{\btheta}^{(\tau)}$ be the weights solving
\begin{align}\label{eq:DL_objective}
    \argmin_{\theta} \frac{\alpha^{(\tau)}}{2}\norm{\nn(\bX, \btheta) - \bY^{(1)}}_F^2 + \frac{1 - \alpha^{(\tau)}}{2}\norm{\nn(\bX, \btheta) - \bY^{(\tau-1)}}_F^2 + \frac{\lambda}{2}\norm{\btheta}_2^2,
\end{align}
with $\alpha^{(\tau)} = \hat{\alpha}^{(\tau)}$ and where $\bY^{(\tau)} \in \R^{n \times p}$.\footnote{We treat class labels as $p$-dimensional one-hot encoded vectors and use norm of the difference between the predicted class probabilities and the one-hot vectors.} Furthermore, let $\hat{\btheta}^{(\tau)}_{\alpha=0}$ be the weights associated with minimizing \eqref{eq:DL_objective} with $\alpha^{(\tau)} = 0$, and $\tilde{\bY}^{(\tau)}_{\alpha=0} \eqdef \nn(\tilde{\bX}, \hat{\btheta}^{(\tau)}_{\alpha=0})$ as well as $\tilde{\bY}^{(\tau)} \eqdef \nn(\tilde{\bX}, \hat{\btheta}^{(\tau)})$ be the predictions on the validation input $\tilde{\bX}$. Then, following Algorithm \ref{alg:optimal_alpha_procedure} with $\norm{\cdot}_2$ replaced by $\norm{\cdot}_F$, and \eqref{eq:DL_objective} rather than \eqref{eq:distill_objective} we can calculate the estimates $\hat{\alpha}^{(t)}$.
These estimates yield comparable predictive performance to the best fixed $\alpha^{(\tau)}$ (found with time-consuming grid search), but only require one additional model fit per distillation step; i.e. $2(\tau-1)+1$ fits compared to $g(\tau-1) +1$ for a grid search over $g$ values. See Figure \ref{fig:optimal_resnet50_by_step} for results and supplementary material for experimental details.

\subsection{Experiment}\label{sec:DL_experiments}
We perform self-distillation with ResNet-50 \citep{He2016DeepRecognition} networks on CIFAR-10 \citep{Krizhevsky2009LearningImages}, with minor pre-processing and augmentations. The model is initialized randomly at each step\footnote{Note, we initialize the models equally across all $\alpha$ for one experiment, but alter the seed for initialization between experiments.} and trained according to the above with either estimated optimal parameters, $\hat{\alpha}^{(\tau)}$, or fixed $\alpha$ for all steps. We use the network weights from the last iteration of training at each distillation step for the next step, irrespective of whether a better model occurred earlier in the training. Our models are trained for a fixed $75$ epochs and each experiment is repeated with $4$ different random seeds over $11$ chains of distillation steps, corresponding to $\alpha \in \{0.0, 0.1, \dots, 0.9\}$ and $\hat{\alpha}^{(\tau)}$, with the first model initialized identically across all chains. The accuracy reported at the $\tau$'th step is based on comparing the training and validation predictions, $\bY^{(\tau)}$ and $f(\tilde{\bX}, \hat{\bbeta}^{(\tau)})$ with the original training and validation targets; $\bY$ and $\tilde{\bY}$.

\begin{figure}[htbp]
    \centering
    \begin{subfigure}[b]{0.44\linewidth}
        \centering
        \includegraphics[width=\linewidth]{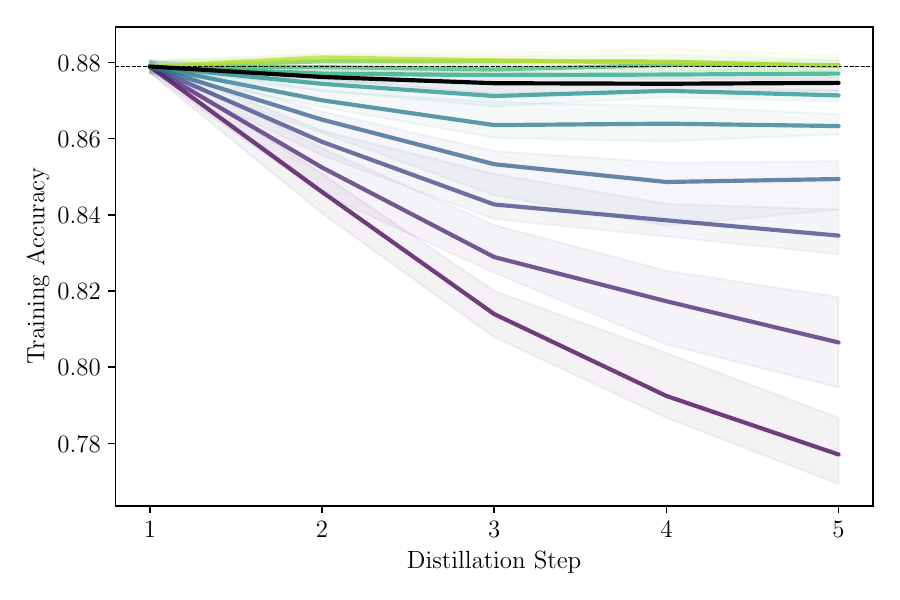}
    \end{subfigure}
    \hfill
    \begin{subfigure}[b]{0.515\linewidth}
        \centering
        \includegraphics[width=\linewidth]{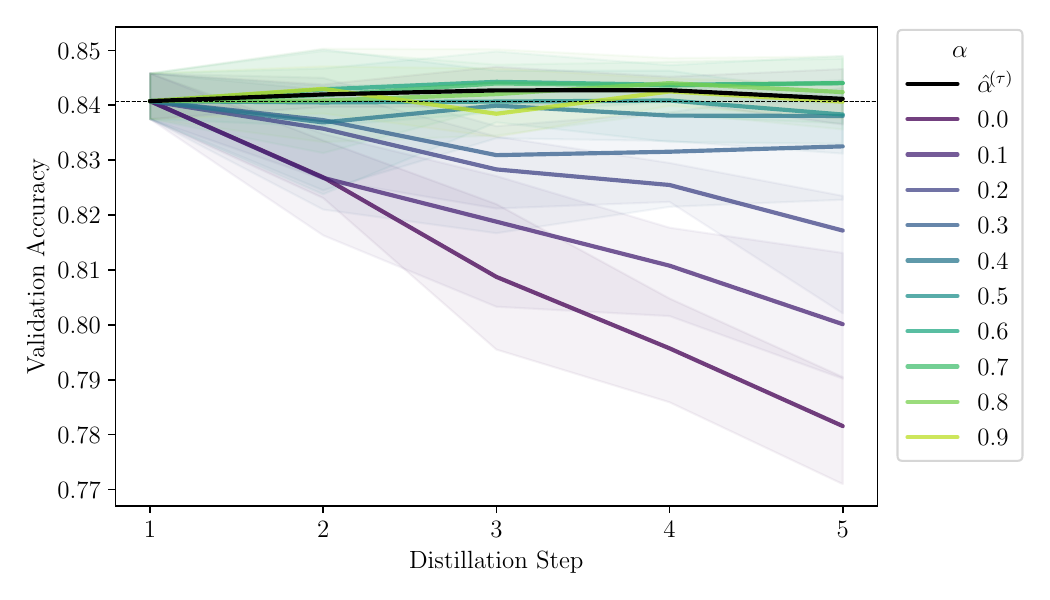}
    \end{subfigure}
    \caption{Training and validation accuracy for five distillation steps with ResNet-50 models on CIFAR-10. Comparing fixed $\alpha^{(t)}$ for $t=2\dots,\tau$ and estimating optimal weight with $\hat{\alpha}^{(t)}$ at each step. The experiment is repeated four times and the mean (and max/min in shaded) is reported.}
    \label{fig:optimal_resnet50_by_step}
\end{figure}

\section{Conclusion}
In this paper, we provided theoretical arguments for the importance of weighting the teacher outputs with the ground-truth targets when performing self-distillation with kernel ridge regressions along with a closed form solution for the optimal weighting parameter. We proved how the solution at any (possibly infinite) distillation step can be calculated directly from the initial distillation step, and that self-distillation for an infinite number of steps corresponds to a classical kernel ridge regression solution with amplified regularization parameter. We showed both empirically and theoretically that the weighting parameter $\alpha$ determines the amount of regularization imposed by self-distillation, and empirically supported our results in a simple deep learning setting.

\subsection{Future Research Directions}
Interesting directions of future research are on rigorously connecting neural networks and kernel methods in a knowledge distillation setting, extend to other objective functions than MSE as well as including intermediate model statistics in the distillation procedure. Finally, a larger empirical study of the connection between the choice of $\alpha$ and the degree of overfitting is interesting as well.

\subsection*{Acknowledgement}
We would like to thank GenomeDK and Aarhus University for providing computational resources that contributed to these research results. Furthermore, we would like to thank Daniel Borup and Ragnhild Ø. Laursen for comments and discussion, as well as Google Researcher Hossein Mobahi and Mehrdad Farajtabar (Deepmind) for clarifications on their experimental setup. We also thank the anonymous reviewers of the NeurIPS 2021 conference for their comments. Kenneth Borup is partly financed by Aarhus University Centre for Digitalisation, Big Data and Data Analytics (DIGIT).

\bibliography{paper}
\bibliographystyle{apalike}

\clearpage
\appendix
\section{Proofs}\label{sec:proofs}
This section includes all proofs referenced in the main part of the paper, along with the associated theorems and lemmas for completeness.

\begin{theorem}
Let $\by^{(\tau)}, \hat{\bbeta}^{(\tau)}$, and $f(\cdot, \hat{\bbeta}^{(\tau)})$ be defined as above. Fix $\alpha^{(2)}, \dots, \alpha^{(\tau)} \in [0, 1)$, and let $\eta(i, \tau) \eqdef \prod_{j=i}^{\tau} \left(1-\alpha^{(j)} \right)$, then for $\tau \geq 1$, we have that
\begin{align*}
    &\by^{(\tau)} = \left( \sum_{i=2}^{\tau} \alpha^{(i)} \eta(i+1, \tau) \left(\bK\left(\bK + \lambda \bI_n\right)^{-1}\right)^{\tau-i+1} + \eta(2, \tau)\left(\bK\left(\bK + \lambda \bI_n\right)^{-1}\right)^{\tau} \right) \by,\\
    &f(\bx, \hat{\bbeta}^{(\tau)}) = \alpha^{(\tau)} f(\bx, \hat{\bbeta}^{(1)}) + (1-\alpha^{(\tau)})f(\bx, \hat{\bbeta}^{(\tau)}_{\alpha=0})
\end{align*}
for any $\bx \in \R^{d}$, where $\hat{\bbeta}^{(\tau)}_{\alpha=0}$ is the minimizer in \eqref{eq:beta_minimizer} with $\alpha^{(\tau)} = 0$.
\end{theorem}
\begin{proof}
We prove the theorem by induction, where we let $\tilde{\bK} \eqdef \bK(\bK + \lambda\bI_n)^\inv$. For $\tau = 1$, the result hold trivially, and thus, assume it hold for $\tau = t$. Since $\bbeta^{(t+1)} = \varphi(\bX)^\trans(\bK + \lambda\bI_n)^\inv\left(\alpha^{(t+1)}\by + (1-\alpha^{(t+1)})\by^{(t)}\right)$ we have that
\begin{align*}
    \by^{(t+1)} &= \varphi(\bX)\varphi(\bX)^\trans(\bK + \lambda\bI_n)^\inv\left(\alpha^{(t+1)}\by + (1-\alpha^{(t+1)})\by^{(t)}\right) \\
    &= \alpha^{(t+1)} \tilde{\bK} \by + (1-\alpha^{(t+1)})\tilde{\bK} \left( \sum_{i=2}^{t} \alpha^{(i)} \eta(i+1, t) \tilde{\bK}^{t-i+1} + \eta(2, t)\tilde{\bK}^{t} \right) \by \\
    &= \alpha^{(t+1)} \tilde{\bK} \by + \left( \sum_{i=2}^{t} \alpha^{(i)} \eta(i+1, t+1) \tilde{\bK}^{(t+1)-i+1} + \eta(2, t+1)\tilde{\bK}^{t+1} \right) \by \\
    &= \left( \sum_{i=2}^{t+1} \alpha^{(i)} \eta(i+1, t+1) \tilde{\bK}^{(t+1)-i+1} + \eta(2, t+1)\tilde{\bK}^{t+1} \right) \by
\end{align*}
which finalizes our induction proof for the first part. For the second part, note that it also holds trivially for $\tau = 1$. Thus assume, it holds for $\tau = t$, then by direct manipulations
\begin{align*}
    f(\bx, \bbeta^{(t+1)}) &= \kappa(\bx, \bX)^\trans(\bK + \lambda\bI_n)^\inv\left(\alpha^{(t+1)}\by + (1-\alpha^{(t+1)})\by^{(t)} \right) \\
    &= \alpha^{(t+1)} f(\bx, \bbeta^{(1)}) + (1-\alpha^{(t+1)})\kappa(\bx, \bX)^\trans(\bK + \lambda\bI_n)^\inv \by^{(t)} \\
    &= \alpha^{(t+1)} f(\bx, \bbeta^{(1)}) + (1-\alpha^{(t+1)})f(\bx, \hat{\bbeta}^{(t+1)}_{\alpha=0}),
\end{align*}
where we let $\hat{\bbeta}^{(t+1)}_{\alpha=0}$ denote the minimizer \eqref{eq:beta_minimizer} with $\alpha^{(t+1)} = 0$; i.e. minimizing the classical kernel ridge regression problem with targets $\by^{(t)}$.
\end{proof}

\begin{lemma}
Let $\bB^{(\tau)}$, and $\bA$ be defined as above, and let $\bB^{(0)} \eqdef \bI$. Then we can express $\bB^{(\tau)}$ recursively as
\begin{align*}
    \bB^{(\tau)} = \bA\left((1-\alpha^{(\tau)})\bB^{(\tau-1)} + \alpha^{(\tau)} \bI_n\right),
\end{align*}
and $[\bB^{(\tau)}]_{k,k} \in [0,1]$ is (strictly) decreasing in $\tau$ for all $k \in [n]$ and $\tau \geq 1$ if $\alpha^{(2)} = \dots = \alpha^{(\tau)} = \alpha$.
\end{lemma}
\begin{proof}
The case, $\tau = 1$, is easy to verify, and we assume the claim holds for $\tau = t$. Then note that
\begin{align*}
    \bA\left((1-\alpha^{(t+1)})\bB^{(t)} + \alpha^{(t+1)} \bI_n \right) &= \sum_{i=2}^{t} \alpha^{(i)} \eta(i+1, t+1) \bA^{(t+1)-i+1} + \eta(2, t+1)\bA^{t+1} + \alpha^{(t+1)}\bA \\
    &= \sum_{i=2}^{t+1} \alpha^{(i)} \eta(i+1, t+1) \bA^{(t+1)-i+1} + \eta(2, t+1)\bA^{t+1} \\
    &= \bB^{(t+1)},
\end{align*}
finalizing the induction proof. Now, assume $\alpha^{(2)} = \dots = \alpha^{(\tau)} = \alpha$ and note that for any $k$ and $\tau \geq 1$, then
\begin{align*}
    [\bA]_k \left((1-\alpha)[\bB^{(\tau-1)}]_{k,k} + \alpha\right) = [\bB^{(\tau)}]_{k,k} \leq [\bB^{(\tau-1)}]_{k,k} = [\bA]_k \left((1-\alpha)[\bB^{(\tau-2)}]_{k,k} + \alpha\right),
\end{align*}
if and only if $[\bB^{(\tau-1)}]_{k,k} \leq [\bB^{(\tau-2)}]_{k,k}$, and iteratively, if and only if $[\bB^{(1)}]_{k,k} \leq [\bB^{(0)}]_{k,k}$. The latter is indeed true, since $\bB^{(1)} = \bA$, and finally, $\bA = \bI_n$ if and only if $\lambda = 0$.
\end{proof}

\begin{theorem}
Assume $\alpha^{(2)} = \dots = \alpha^{(\tau)} = \alpha$. Then, for any pair of diagonals of $\bD$, i.e. $d_k$ and $d_j$, where $d_k > d_j$, we have that for all $\tau \geq 1$,
\begin{align*}
    \frac{[\bB^{(\tau)}]_{k,k}}{[\bB^{(\tau)}]_{j,j}} &= \begin{cases} \frac{1 + \frac{\lambda}{d_j}}{1 + \frac{\lambda}{d_k}}, &\text{for } \alpha = 1, \\ \left(\frac{1 + \frac{\lambda}{d_j}}{1 + \frac{\lambda}{d_k}}\right)^\tau, &\text{for } \alpha = 0, \end{cases}
\end{align*}
and if we let $\mathrm{sgn}(\cdot)$ denote the sign function, i.e.
\begin{align*}
    \mathrm{sgn}(x) \eqdef \begin{cases} \;1 &\text{if} \quad x > 0 \\ \;0 &\text{if} \quad x = 0 \\ -1 &\text{if} \quad x < 0\end{cases},
\end{align*}
then for $\alpha \in (0, 1)$ we have that
\begin{align*}
    &\mathrm{sgn}\left(\frac{[\bB^{(\tau)}]_{k,k}}{[\bB^{(\tau)}]_{j,j}} - \frac{[\bB^{(\tau-1)}]_{k,k}}{[\bB^{(\tau-1)}]_{j,j}} \right) \nonumber\\
    &\quad= \mathrm{sgn}\left(\left( \left(\frac{[\bB^{(\tau-1)}]_{k,k}}{[\bB^{(\tau-1)}]_{j,j}} - \frac{[\bA]_{k,k}}{[\bA]_{j,j}} \right)\frac{[\bA]_{j,j}}{[\bB^{(\tau-1)}]_{k,k} ([\bA]_{k,k} - [\bA]_{j,j})} + 1\right)^\inv - \alpha \right).
\end{align*}
\end{theorem}
\begin{proof}
First note that
\begin{align*}
    \frac{[\bA]_{k,k}}{[\bA]_{j,j}} = \frac{\frac{d_k}{d_k + \lambda}}{\frac{d_j}{d_j + \lambda}} = \frac{1 + \frac{\lambda}{d_j}}{1 + \frac{\lambda}{d_k}},
\end{align*}
and for $\alpha = 1$, \eqref{eq:recurrent_B} amounts to $\bB^{(\tau)} = \bA$, which gives the first result. For $\alpha = 0$, \eqref{eq:recurrent_B} amounts to $\bB^{(\tau)} = \bA^{\tau}$, and the second result follows. For the remainder we denote $[\bB^{(\tau-1)}]_{k,k}$ by $\bB_k$ and $[\bA]_{k,k}$ by $\bA_k$ to simplify notation. We investigate the case where both r.h.s. and l.h.s. equals zero. Thus, for $\alpha \in (0, 1)$, we observe that if
\begin{align*}
    \frac{\bB_k}{\bB_j} &= \frac{\bA_k}{\bA_j} \frac{(1-\alpha)\bB_k + \alpha}{(1-\alpha)\bB_j + \alpha} = \frac{\bA_k}{\bA_j}\frac{\frac{1-\alpha}{\alpha} \bB_k + 1}{\frac{1-\alpha}{\alpha}\bB_j + 1},
\end{align*}
then we have that
\begin{align*}
    \bB_j &= \frac{1}{\frac{\bA_k}{\bA_j}\left(\frac{\frac{1-\alpha}{\alpha}\bB_k + 1}{\bB_k}\right) - \frac{1-\alpha}{\alpha}} = \frac{\bB_k}{\frac{\bA_k}{\bA_j}\left(\frac{1-\alpha}{\alpha}\bB_k + 1\right) - \frac{1-\alpha}{\alpha}\bB_k} \\
    \frac{1-\alpha}{\alpha}\bB_j + 1 &= \frac{\frac{1-\alpha}{\alpha}\bB_k}{\frac{\bA_k}{\bA_j}\left(\frac{1-\alpha}{\alpha}\bB_k + 1\right) - \frac{1-\alpha}{\alpha}\bB_k} + 1 = \frac{\frac{\bA_k}{\bA_j}\left(\frac{1-\alpha}{\alpha}\bB_k + 1\right)}{\frac{\bA_k}{\bA_j}\left(\frac{1-\alpha}{\alpha}\bB_k + 1\right) - \frac{1-\alpha}{\alpha}\bB_k},
\end{align*}
which in turn yield that
\begin{align*}
    \frac{\bB_k}{\bB_j} &= \frac{\bA_k}{\bA_j} \left(\frac{1-\alpha}{\alpha}\bB_k + 1\right) \left(\frac{\frac{\bA_k}{\bA_j}\left(\frac{1-\alpha}{\alpha}\bB_k + 1\right) - \frac{1-\alpha}{\alpha}\bB_k}{\frac{\bA_k}{\bA_j}\left(\frac{1-\alpha}{\alpha}\bB_k + 1\right)} \right) \\
    &= \frac{\bA_k}{\bA_j}\left(\frac{1-\alpha}{\alpha}\bB_k + 1 \right) - \frac{1 - \alpha}{\alpha}\bB_k.
\end{align*}
Now, observe that $0 = \alpha - \left( \left(\frac{\bB_k}{\bB_j} - \frac{\bA_k}{\bA_j} \right) \frac{\bA_j}{\bB_k (\bA_k - \bA_j)} + 1\right)^\inv$ yield that
\begin{align*}
    \frac{\bB_k}{\bB_j} &= \frac{1 - \alpha}{\alpha} \frac{\bB_k(\bA_k- \bA_j)}{\bA_j} + \frac{\bA_k}{\bA_j} = \frac{\bA_k}{\bA_j}\left(\frac{1-\alpha}{\alpha}\bB_k + 1 \right) - \frac{1 - \alpha}{\alpha}\bB_k.
\end{align*}
Thus, similar calculations with $>$ and $<$ instead of $=$, completes the claim.
\end{proof}

\begin{theorem}
Fix $\tau \geq 2$, $\lambda > 0$ and $\alpha^{(2)}, \dots, \alpha^{(\tau-1)} \in \R$, then
\begin{align*}
    \alpha^{\star(\tau)} &= \argmin_{\alpha^{(\tau)} \in \R} \norm{\tilde{\by} - f(\tilde{\bX}, \hat{\bbeta}^{(\tau)})}_2^2 \\
    &= \frac{\left(\frac{\partial}{\partial \alpha^{(\tau)}} f(\tilde{\bX}, \hat{\bbeta}^{(\tau)})\right)^\trans \left(\tilde{\by} - \tilde{\by}^{(1)}\right)}{\norm{\frac{\partial}{\partial \alpha^{(\tau)}} f(\tilde{\bX}, \hat{\bbeta}^{(\tau)})}^2} + 1, \\
    &= 1 - \frac{\left(\tilde{\by}^{(\tau)}_{\alpha=0} - \tilde{\by}^{(1)}\right)^\trans \left(\tilde{\by} - \tilde{\by}^{(1)}\right)}{\norm{\tilde{\by}^{(\tau)}_{\alpha=0} - \tilde{\by}^{(1)}}_2^2}
\end{align*}
where $\tilde{\by}^{(1)} = f(\tilde{\bX}, \hat{\bbeta}^{(1)})$, and $\tilde{\by}^{(\tau)}_{\alpha=0} = f(\tilde{\bX}, \hat{\bbeta}^{(\tau)}_{\alpha=0})$.
\end{theorem}
\begin{proof}
Let $\L(\alpha^{(\tau)}, \lambda) = \norm{\tilde{\by} - f(\tilde{\bX}, \hat{\bbeta}^{(\tau)})}^2$, where $f$ depends on $\alpha^{(\tau)}$ and $\lambda$ through $\hat{\bbeta}^{(\tau)}$. Note that,
\begin{align*}
    f(\tilde{\bX}, \hat{\bbeta}^{(\tau)}) &= \kappa(\tilde{\bX}, \bX)(\bK +\lambda\bI)^{-1}\left( \alpha^{(\tau)}\by + (1-\alpha^{(\tau)})\by^{(\tau-1)} \right) \\
    \frac{\partial}{\partial \alpha^{(\tau)}} f(\tilde{\bX}, \hat{\bbeta}^{(\tau)}) &= \kappa(\tilde{\bX}, \bX)(\bK +\lambda\bI)^{-1}\left(\by - \by^{(\tau-1)} \right).
\end{align*}
Then for fixed $\lambda > 0$, we have that
\begin{align*}
    \frac{\partial}{\partial \alpha^{(\tau)}}\L(\alpha, \lambda) &= \left(\frac{\partial}{\partial \alpha^{(\tau)}}f(\tilde{\bX}, \hat{\bbeta}^{(\tau)}) \right)^\trans \left( 2 f(\tilde{\bX}, \hat{\bbeta}^{(\tau)}) - 2\tilde{\by} \right)
\end{align*}
and since we can decompose $f(\tilde{\bX}, \hat{\bbeta}^{(\tau)})$ as
\begin{align*}
    f(\tilde{\bX}, \hat{\bbeta}^{(\tau)}) &= \alpha^{(\tau)}\kappa(\tilde{\bX}, \bX)(\bK +\lambda\bI)^{-1}\left(\by - \by^{(\tau-1)} \right) + \kappa(\tilde{\bX}, \bX)(\bK +\lambda\bI)^{-1}\by^{(\tau-1)} \\
    &= \alpha^{(\tau)}\frac{\partial}{\partial \alpha^{(\tau)}}f(\tilde{\bX}, \hat{\bbeta}^{(\tau)}) + \kappa(\tilde{\bX}, \bX)(\bK +\lambda\bI)^{-1}\by^{(\tau-1)},
\end{align*}
and set $\frac{\partial}{ \partial \alpha^{(\tau)}}\L(\alpha^{(\tau)}, \lambda) = 0$, we can solve as follows
\begin{align*}
    \left(\partial f^{(\tau)}\right)^\trans \tilde{\by} - \left(\partial f^{(\tau)}\right)^\trans \kappa(\tilde{\bX}, \bX)(\bK +\lambda\bI)^{-1}\by^{(\tau-1)} &= \alpha^{(\tau)} \left(\partial f^{(\tau)}\right)^\trans \left(\partial f^{(\tau)}\right) \\
    &= \alpha^{(\tau)} \norm{\partial f^{(\tau)}}^2,
\end{align*}
where we use the notation $\partial f^{(\tau)} \eqdef \frac{\partial}{\partial \alpha^{(\tau)}}f(\tilde{\bX}, \hat{\bbeta}^{(\tau)})$ for brevity. Now since
\begin{align*}
    -\kappa(\tilde{\bX}, \bX)(\bK +\lambda\bI)^{-1}\by^{(\tau-1)} &= \kappa(\tilde{\bX}, \bX)(\bK +\lambda\bI)^{-1}(\by - \by^{(\tau-1)}) - \kappa(\tilde{\bX}, \bX)(\bK +\lambda\bI)^{-1}\by,
\end{align*}
we can finalize the proof with
\begin{align*}
    \alpha^{\star (\tau)} &= \frac{\left(\frac{\partial}{\partial \alpha} f(\tilde{\bX}, \hat{\bbeta}^{(\tau)})\right)^\trans \left(\tilde{\by} - \tilde{\by}^{(1)}\right)}{\norm{\frac{\partial}{\partial \alpha} f(\tilde{\bX}, \hat{\bbeta}^{(\tau)})}^2} + 1,
\end{align*}
and noting that $\frac{\partial}{\partial \alpha^{(\tau)}} f(\tilde{\bX}, \hat{\bbeta}^{(\tau)}) = \tilde{\by}^{(1)} - \tilde{\by}^{(\tau)}_{\alpha=0}$.
\end{proof}

Note, in the following we state and prove a slightly more general result than Theorem \ref{thm:kernel_limit}.
\begin{theorem}
Let $\by^{(\tau)}, \hat{\bbeta}^{(\tau)}$, and $f(\cdot, \hat{\bbeta}^{(\tau)})$ be defined as above, and $\alpha \in [0,1]$, then the following limits hold
\begin{align*}
    \by^{(\infty)} &\eqdef \lim_{\tau \to \infty} \by^{(\tau)} = \alpha \bK\left(\alpha\bK + \lambda\bI_n \right)^{-1}\by \\
    f(\bx, \hat{\bbeta}^{(\infty)}) &\eqdef \lim_{\tau \to \infty} f(\bx, \hat{\bbeta}^{(\tau)}) = \alpha \kappa(\bx, \bX)^\trans(\bK + \lambda\bI_n)^\inv \left(\bI_n + (1-\alpha)\bK(\alpha\bK + \lambda\bI_n)^\inv \right)\by
\end{align*}
and if $\alpha > 0$, then
\begin{align*}
    \by^{(\infty)} &= \bK\left(\bK + \frac{\lambda}{\alpha}\bI_n \right)^{-1} \by \\
    f(\bx, \hat{\bbeta}^{(\infty)}) &= \alpha f(\bx, \hat{\bbeta}^{(1)}) + (1-\alpha)f(\bx, \hat{\bgamma}^{(\infty)})
\end{align*}
where \eqref{eq:Y_limit} corresponds to \emph{classical} kernel ridge regression with amplified regularization parameter $\frac{\lambda}{\alpha}$, and we let $\hat{\bgamma}^{(\infty)}$ denote the kernel ridge regression parameter associated with solving another kernel ridge regression on the targets $\by^{(\infty)}$ with regularization parameter $\lambda$. Furthermore, the convergence $\lim_{\tau \to \infty} \by^{(\tau)}$ is of linear rate.
\end{theorem}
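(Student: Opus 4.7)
My plan is to pass to the limit in the closed-form expression \eqref{eq:recurrent_y} from Theorem~\ref{thm:unroll_Yt}. Define $\tilde{\bK} \eqdef \bK(\bK + \lambda\bI_n)^\inv$; since $\bK$ is positive semi-definite and $\lambda > 0$, $\tilde{\bK}$ is simultaneously diagonalizable with $\bK$ and all its eigenvalues $d_i/(d_i + \lambda)$ lie in $[0, \rho_0]$ with $\rho_0 \eqdef \max_i d_i/(d_i + \lambda) < 1$. Hence for any $\alpha \in [0, 1)$, the matrix $(1-\alpha)\tilde{\bK}$ has spectral radius at most $(1-\alpha)\rho_0 < 1$, so the geometric series $\sum_{i=0}^\infty((1-\alpha)\tilde{\bK})^i$ converges in operator norm to $(\bI_n - (1-\alpha)\tilde{\bK})^\inv$, while the trailing remainder $(1-\alpha)^{\tau-1}\tilde{\bK}^{\tau}$ has norm bounded by $\rho_0 \cdot ((1-\alpha)\rho_0)^{\tau-1}$ and hence vanishes.

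Taking termwise limits in \eqref{eq:recurrent_y} yields $\by^{(\infty)} = \alpha\tilde{\bK}(\bI_n - (1-\alpha)\tilde{\bK})^\inv \by$, and the push-through identity $\bI_n - (1-\alpha)\bK(\bK + \lambda\bI_n)^\inv = (\alpha\bK + \lambda\bI_n)(\bK + \lambda\bI_n)^\inv$ then simplifies this to the compact form $\by^{(\infty)} = \alpha\bK(\alpha\bK + \lambda\bI_n)^\inv\by$. For $\alpha > 0$, factoring $\alpha$ into the inverse recovers the classical kernel ridge regression form with amplified regularization $\lambda/\alpha$. The edge case $\alpha = 1$ lies outside Theorem~\ref{thm:unroll_Yt}'s range, but there the recursion trivializes to $\by^{(\tau)} = \tilde{\bK}\by$ for every $\tau \geq 1$, which still matches the formula. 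The formula for $f(\bx, \hat{\bbeta}^{(\infty)})$ is then obtained by substituting $\by^{(\infty)}$ into \eqref{eq:f_tau} (which is linear, hence continuous, in $\by^{(\tau-1)}$) and factoring $\alpha$ through the bracket.

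For the decomposition when $\alpha > 0$, I pass to the limit in identity \eqref{eq:recurrent_f} of Theorem~\ref{thm:unroll_Yt}: since $\hat{\bbeta}^{(\tau)}_{\alpha=0} = \varphi(\bX)^\trans(\bK + \lambda\bI_n)^\inv \by^{(\tau-1)}$ depends linearly on $\by^{(\tau-1)}$, the limit $\hat{\bgamma}^{(\infty)} \eqdef \lim_{\tau\to\infty}\hat{\bbeta}^{(\tau)}_{\alpha=0}$ exists and equals the kernel ridge regression parameter associated with the limiting targets $\by^{(\infty)}$. Linear convergence follows by splitting $\by^{(\tau)} - \by^{(\infty)} = -\frac{\alpha}{1-\alpha}\sum_{i=\tau}^\infty((1-\alpha)\tilde{\bK})^i\by + (1-\alpha)^{\tau-1}\tilde{\bK}^\tau\by$ and bounding each piece above by a geometric sequence with ratio $(1-\alpha)\rho_0 < 1$. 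The main subtlety I anticipate is not any algebraic step but handling the endpoints $\alpha \in \{0, 1\}$ coherently with the generic spectral argument: at $\alpha = 0$ the limit collapses to the zero solution and the decomposition is vacuous, whereas at $\alpha = 1$ Theorem~\ref{thm:unroll_Yt} does not directly apply and the recursion must be treated separately since it stabilizes after a single step.
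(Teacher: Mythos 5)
Your proof is correct and follows essentially the same route as the paper's: a spectral bound showing $(1-\alpha)\bK(\bK+\lambda\bI_n)^{-1}$ has spectral radius below one, the geometric-series limit, the push-through simplification to $\alpha\bK(\alpha\bK+\lambda\bI_n)^{-1}\by$, substitution into \eqref{eq:f_tau} for the prediction formula, and geometric decay of the tail for the linear rate. Two minor points where you diverge, both to your credit: you treat $\alpha=1$ explicitly (where the paper's $\tfrac{\alpha}{1-\alpha}$-series manipulation degenerates), and your reading of $\hat{\bgamma}^{(\infty)}$ as $\lim_{\tau\to\infty}\hat{\bbeta}^{(\tau)}_{\alpha=0}$ --- i.e.\ the ridge fit to the limiting targets $\by^{(\infty)}$ with parameter $\lambda$, yielding $\kappa(\bx,\bX)^\trans(\bK+\lambda\bI_n)^{-1}\bK\left(\bK+\tfrac{\lambda}{\alpha}\bI_n\right)^{-1}\by$ --- is the one that actually matches the displayed algebra, whereas the paper's verbal gloss (the classical ridge fit to the original targets with parameter $\tfrac{\lambda}{\alpha}$) would give $\kappa(\bx,\bX)^\trans\left(\bK+\tfrac{\lambda}{\alpha}\bI_n\right)^{-1}\by$, which differs from the term appearing in its own derivation whenever $\lambda>0$.
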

\begin{proof}
By \eqref{eq:kernel_with_SVD} we have that $\bK(\bK + \lambda\bI_n)^{-1} = \bV \bD \left(\bD + \lambda\bI_n\right)^{-1}\bV^\trans$ where $\lambda > 0$, $\bD$ is positive diagonal and $\bV$ is orthogonal, and hence, the eigenvalues of $\bK(\bK + \lambda\bI_n)^{-1}$ are all smaller than $1$ in absolute value, and thus $(1-\alpha)^{\tau-1}\left(\bK(\bK + \lambda\bI_n)^\inv \right)^{\tau}$ converge to the zero-matrix when $\tau \to \infty$. Thus, using the limit for a geometric series of matrices we get that
\begin{align*}
    \lim_{\tau \to \infty} \by^{(\tau)} &= \left( \frac{\alpha}{1-\alpha} \sum_{i=1}^\infty \left((1-\alpha) \bK(\bK + \lambda\bI_n)^\inv \right)^i \right)\by \\
    &= \frac{\alpha}{1-\alpha}(1-\alpha)\bK(\bK + \lambda\bI_n)^\inv (\bI_n - (1-\alpha) \bK(\bK + \lambda\bI_n)^\inv)^\inv \by \\
    &= \alpha \bK \left(\alpha\bK + \lambda\bI_n \right)^\inv \by.
\end{align*}
If $\alpha > 0$, the remaining result for $\lim_{\tau \to \infty} \by^{(\tau)}$ follows directly. Now, by inserting $\by^{(\infty)}$ and manipulating the result, we get that
\begin{align*}
    f(\bx, \bbeta^{(\infty)}) &= \kappa(\bx, \bX)^\trans(\bK + \lambda\bI_n)^\inv\left(\alpha \by + (1-\alpha)\by^{(\infty)} \right) \\
    &= \kappa(\bx, \bX)^\trans(\bK + \lambda\bI_n)^\inv\left(\alpha \bI_n + (1-\alpha)\alpha \bK\left(\alpha\bK + \lambda\bI_n \right)^{-1} \right)\by \\
    &= \alpha\kappa(\bx, \bX)^\trans(\bK + \lambda\bI_n)^\inv\left(\bI_n + (1-\alpha)\bK\left(\alpha\bK + \lambda\bI_n \right)^{-1} \right)\by,
\end{align*}
and if $\alpha > 0$, then
\begin{align*}
    f(\bx, \bbeta^{(\infty)}) &= \alpha f(\bx, \bbeta^{(1)}) + (1-\alpha)\kappa(\bx, \bX)^\trans(\bK + \lambda\bI_n)^\inv\bK\left(\bK + \frac{\lambda}{\alpha}\bI_n \right)^{-1}\by \\
    &= \alpha f(\bx, \bbeta^{(1)}) + (1-\alpha)f(\bx, \hat{\bgamma}^{(\infty)}),
\end{align*}
where we let $\hat{\bgamma}^{(\infty)}$ denote the kernel ridge regression parameter associated with the classical kernel ridge regression problem on the targets $\by^{(\infty)}$ with regularization parameter $\lambda$.

Finally, denote by $\bC \eqdef (1-\alpha) \bK(\bK + \lambda\bI_n)^\inv$, then we have that
\begin{align*}
    \bE(t) &\eqdef \sum_{i=1}^{t} \bC^i - \sum_{i=1}^{\infty} \bC^i =  \bC^{t+1}\left(\bC - \bI_n\right)^{-1},
\end{align*}
and thus for an additional $s$ steps we have $\bE(t + s) = \bC^{t+s+1}(\bC - \bI_n)^{-1} = \bC^{s}\bE(t)$. Hence, the convergence is of linear rate as claimed.
\end{proof}

\section{Experiments}\label{sec:experiments}
In the following we show empirical results of performing a simple self-distillation procedure with deep neural networks with varying choices of $\alpha$ to investigate the large scale effects. The experiments are adapted from \citet{mobahi2020selfdistillation} with the additional introduction of the $\alpha$-parameter. For stronger baselines of the possible performance gains from self-distillation see e.g. \citet{furlanello2018born, Tian2020RethinkingNeed, Ahn2019VariationalTransfer, Yang2018KnowledgeStudents}.
The following sections provide additional details to that of Section \ref{sec:optimal_alpha_DL}.

\subsection{Experimental Setup}
We perform self-distillation with ResNet-50 \citep{He2016DeepRecognition} networks on CIFAR-10 \citep{Krizhevsky2009LearningImages}, with minor pre-processing and augmentations.\footnote{Training: We randomly flip an image horizontally with probability $\frac{1}{2}$, followed by a random $32 \times 32$ crop of the $40 \times 40$ zero padded image. Finally we normalize the image to have mean $0$ and standard deviation $1$. Validation: We normalize the image with the empirical mean and standard deviation from the training data.} The model is initialized randomly at each step\footnote{Note, we initialize the models equally across all $\alpha$ for one experiment, but alter the seed for initialization between experiments.} and trained as described in Section \ref{sec:optimal_alpha_DL} with either estimated optimal parameters, $\hat{\alpha}^{(\tau)}$, or fixed $\alpha$ for all steps. We use Adam optimizer with a learning rate of $10^{-4}$, $\ell_2$ regularization with regularization coefficient $10^{-4}$, and train on the full $50000$ training images and validate our generalization performance on the $10000$ test images. We use the weights from the last step of optimization at each distillation step for the next step, irrespective of whether a better model occurred earlier in the training. Our models are trained for a fixed $75$ epochs, which does not allow our models to overfit the training data, which is important for our models to be suitable for distillation procedures \citep{Dong2019DistillationNetwork}. The experiments are performed on a single Nvidia Tesla V100 16GB GPU with the PyTorch Lightning framework \citep{Falcon2019PyTorchLightning}.

\subsection{Results}
We repeat our experiment $4$ times and illustrate the mean, minimum and maximum at each distillation step in Figure \ref{fig:optimal_resnet50_by_step}. Each experiment is $11$ chains of distillation steps, corresponding to $\alpha \in \{0.0, 0.1, \dots, 0.9\}$, with the first model initialized identically across all chains. The accuracy reported at the $\tau$'th step is based on comparing the training and validation predictions, $\bY^{(\tau)}$ and $f(\tilde{\bX}, \hat{\bbeta}^{(\tau)})$ with the original training and validation targets; $\bY$ and $\tilde{\bY}$.

\begin{figure}[htbp]
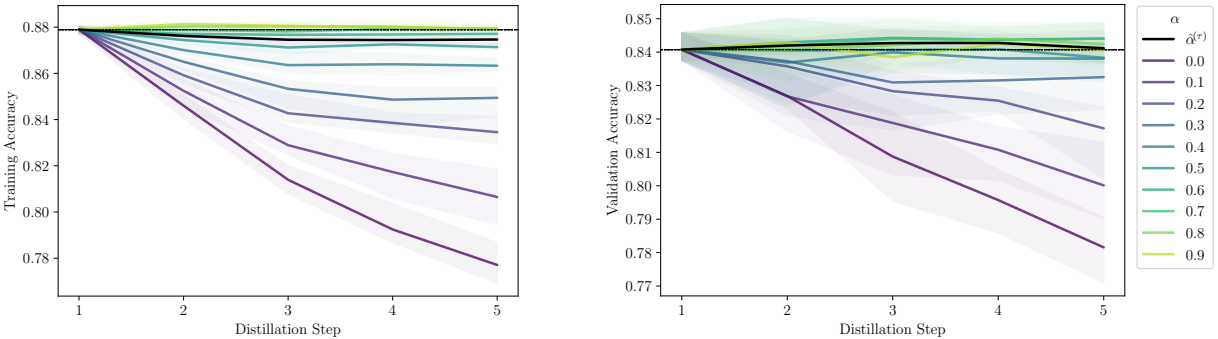

    \centering
    \begin{subfigure}[b]{0.44\linewidth}
        \centering
        \includegraphics[width=\linewidth]{figures/resnet50_optimal_train.pdf}
    \end{subfigure}
    \hfill
    \begin{subfigure}[b]{0.515\linewidth}
        \centering
        \includegraphics[width=\linewidth]{figures/resnet50_optimal_val.pdf}
    \end{subfigure}
    \caption{(Identical to Figure \ref{fig:optimal_resnet50_by_step}) Training and validation accuracy for five distillation steps with ResNet-50 models on CIFAR-10. Comparing fixed $\alpha^{(t)}$ for $t=2\dots,\tau$ and estimating optimal weight with $\hat{\alpha}^{(t)}$ at each step. The experiment is repeated four times and the mean (and max/min in shaded) is reported.}
\end{figure}

The theory introduced in Section \ref{sec:main_results} suggests that self-distillation corresponds to a progressively amplified regularization of the solution, and larger $\alpha$ dampens the amount of regularization imposed by the procedure more than small values of $\alpha$. Thus, for small $\alpha$ we expect the training accuracy to decrease with each distillation, but for larger $\alpha$ we might experience an increase in training accuracy, due to additional training iterations and a sufficient amount of ground-truth target information being kept in the optimization problem, which could prove beneficial. However, depending on the need for increased regularization, we expect the validation accuracy to increase for some $\alpha$, and possibly decrease for $\alpha$ values either too small or too large. The above properties are observed in Figure \ref{fig:optimal_resnet50_by_step}, where $\alpha \leq 0.4$ generally overregularize the solution, and performance drops with distillation steps. For $\alpha > 0.4$ the performance generally improves with distillation, but for $\alpha$ close to one, the gains reduce, and a suitably balanced $\alpha$ for this experiment would be in $[0.5, 0.7]$. This aligns well with the optimal $\hat{\alpha}^{(\tau)}$ estimated at approx. $0.6$ for each step of self-distillation.
\section{Connection to Neural Networks}\label{sec:ntk}
This paper theoretically investigates self-distillation of kernel ridge regression models, but distillation procedures are more commonly used in a deep learning setting. However, recent research in the over-parameterized regime has shown great progress and connected wide neural networks with kernel ridge regression using the Neural Tangent Kernel (NTK) \citep{Lee2019WideDescent, Lee2020GeneralizedNetworks, Hu2019SimpleGuarantee}. The following is a brief and informal connection between kernel ridge regression and wide\footnote{Note, we refer to width as the number of hidden nodes in a fully connected neural network or channels in a convolutional neural network.} neural networks, motivating our problem setup and approach to estimate $\hat{\alpha}^{(\tau)}$ in Section \ref{sec:optimal_alpha_DL}.

Consider a neural network with scalar output $\nn(\bx, \btheta) \in \R$, where $\btheta(t) \in \R^D$ is the vector of all network parameters at training iteration $t \geq 0$, and $\bx \in \R^d$ some input. We consider the case where we use gradient descent on the MSE objective, $\L(\btheta) = \frac{1}{2}\sum_{i=1}^n (\nn(\bx_i, \btheta) - y_i)^2$ over some training dataset $\D_\mathrm{train} \subseteq \R^d \times \R$. Consider the first-order Taylor-expansion of $\nn(\bx, \btheta)$ w.r.t its parameters at initialization, $\btheta(0)$,
\begin{align}\label{eq:NTK_approx}
    \nn(\bx, \btheta) \approx \nn(\bx, \btheta(0)) + \langle\nabla_{\btheta} \nn(\bx, \btheta(0)), \btheta - \btheta(0)\rangle
\end{align}
where $\nn(\bx, \btheta(0))$ and $\nabla_{\btheta} \nn(\bx, \btheta(0))$ are constants w.r.t. $\btheta$. For sufficiently wide networks, \eqref{eq:NTK_approx} holds, and we say that we are in the \emph{(NTK) regime} \citep{Arora2019OnNet, Lee2019WideDescent}. Now, let $\varphi(\bx) \eqdef \nabla_{\btheta} \nn(\bx, \btheta(0))$ for any $\bx \in \R^d$, and denote the random kernel $\kappa(\bx_i, \bx_j) \eqdef \langle\varphi(\bx_i), \varphi(\bx_j)\rangle$ for any $\bx_i, \bx_j \in \R^d$ \citep{Jacot2018NeuralNetworks}. For sufficiently wide networks, the random kernel converges to a deterministic kernel, and since the r.h.s. of \eqref{eq:NTK_approx} is linear, one can show that minimizing $\L$ with gradient descent leads to the solution of the kernel regression problem, with the NTK; $\bx \mapsto \kappa(\bx, \bX)^\trans \kappa(\bX, \bX)^\inv \by$, where $\bX \in \R^{n \times d}$ is the matrix of training inputs, and $\by \in \R^n$ the vector of training targets \citep{Arora2019OnNet, Lee2019WideDescent}. It has been shown that when minimizing the $\ell_2$-regularized MSE loss, the solution becomes the kernel ridge regression solution \citep{Lee2020GeneralizedNetworks}.

The connections between neural networks and kernel ridge regressions in knowledge distillation settings have, to the best of our knowledge, not been explicitly investigated yet, but we hope that the results of this paper will improve the understanding of self-distillation of neural networks once such a connection is made rigorously.
\section{Connections to Constrained Optimization Problem}\label{app:mobahi}
The setup investigated in this paper is the unconstrained optimization problem presented in \eqref{eq:unconstrained_optimization}, but some of the results can easily be extended to a constrained optimization problem, with a general regularization functional in Hilbert spaces, namely the natural extension of the setup proposed by \citet{mobahi2020selfdistillation}. For the rest of this section we assume $\alpha^{(2)} = \dots = \alpha^{(\tau)} = \alpha$. \citet{mobahi2020selfdistillation} propose to solve the problem
\begin{align}\label{eq:mohabi}
    f^{(\tau)} \eqdef \argmin_{f \in \F} \int_\mathcal{X} \int_\mathcal{X} u(\bm{x}, \bm{x}') f(\bm{x}) f(\bm{x}') d\bm{x} d\bm{x}' \quad \text{s.t.} \quad \frac{1}{N} \sum_{n=1}^{N} \left(f(\bm{x}_n) - y_n \right)^2 \leq \varepsilon,
\end{align}
where $\varepsilon > 0$ is a desired loss tolerance, $\tau \geq 1$, $f^{(0)} (\bm{x}_n) = y_n$ for $n=1, \dots, N$,  and $u$ being symmetric and such that $\forall f \in \mathcal{F}$\footnote{For a given $u$ the function space $\mathcal{F}$ is the space of functions $f$ for which the double integral in \eqref{eq:mohabi} is bounded.} the double integral is greater than or equal to $0$ with equality only when $f(x) = 0$. See \citet{mobahi2020selfdistillation} for details.

The natural extension of this problem is to include ground-truth labels, and solve the weighted problem
\begin{align}\label{eq:mohabi_extension}
    &f^{(\tau)} = \argmin_{f \in \F} \int_\mathcal{X} \int_\mathcal{X} u(\bm{x}, \bm{x}') f(\bm{x}) f(\bm{x}') d\bm{x} d\bm{x}' \\ \text{s.t.} \quad &\frac{\alpha}{N} \sum_{n=1}^{N} \left(f(\bm{x}_n) - y_n \right)^2 + \frac{1-\alpha}{N}\sum_{n=1}^{N} \left( f(\bm{x}_n) - f^{(\tau-1)}(\bm{x}_n) \right)^2 \leq \varepsilon,
\end{align}
for $\tau \geq 1$, where $\alpha \in [0,1]$ and $f^{(0)} (\bm{x}_n) = y_n$ for $n=1, \dots, N$. In \citet{mobahi2020selfdistillation}, $\alpha = 0$, and this problem completely ignores the ground truth data after the first model fit, and it is easy to see that consecutive self-fits will be penalized increasingly stronger, and eventually collapse to zero, whenever $\frac{1}{N} \sum_{n=1}^{N} \left(f(\bm{x}_n) - y_n \right)^2 \leq \varepsilon$. The case $\alpha = 1$, corresponds to fitting to the ground-truth at each iteration, and do not benefit from distillation, and thus is without interest here.

\subsection{Collapsing and converging conditions}\label{sec:collapsing_converging}
The regularization functional of \eqref{eq:mohabi_extension}, is clearly minimized by $f_t(\bm{x}) = 0$, but in order for this to be a solution for some $\tau \geq 1$, it must hold that
\begin{align*}
    \frac{\alpha}{N}\norm{\bm{y}}_2^2 + \frac{1-\alpha}{N}\norm{\bm{y}^{(\tau-1)}}_2^2 \leq \varepsilon,
\end{align*}
where we use the notation that $\bm{y}^{(\tau)} = (f^{(\tau)}(\bm{x}_1), \dots, f^{(\tau)}(\bm{x}_N))^T$ and $\bm{y} = (y_1, \dots, y_N)$. For $\tau = 1$, this amounts to $\frac{1}{N}\norm{\bm{y}}_2^2 \leq \varepsilon$, and for $\tau > 1$
\begin{align}\label{eq:collapse_condition}
    \frac{1-\alpha}{N}\norm{\bm{y}^{(\tau-1)}}_2^2 \leq \varepsilon - \frac{\alpha}{N}\norm{\bm{y}}_2^2.
\end{align}
But since the l.h.s. is non-negative, it is required that $\frac{\alpha}{N}\norm{\bm{y}}_2^2 \leq \varepsilon$ in order for $f^{(\tau)}(\bm{x}) = 0$ to be a solution. Hence, we can construct the following settings, that determine the behavior of the solutions:
\begin{enumerate}
    \item $\frac{1}{N}\norm{\bm{y}}_2^2 \in \left[0, \varepsilon\right] \implies \norm{\bm{y}^{(\tau)}}_2 = 0 ~\forall \tau \geq 1,$ \hfill (Collapsed solution)
    \item $\frac{1}{N}\norm{\bm{y}}_2^2 \in \left(\varepsilon, \frac{\varepsilon}{\alpha}\right] \implies \exists \underline{\tau} \geq 1$ such that $\begin{cases} \norm{\bm{y}^{(\tau)}}_2 > 0 &\forall \tau < \underline{\tau}, \\ \norm{\bm{y}^{(\tau)}}_2 = 0 &\forall \tau \geq \underline{\tau},\end{cases}$ \hfill (Converging to collapsed solution)
    \item $\frac{1}{N}\norm{\bm{y}}_2^2 \in \left(\frac{\varepsilon}{\alpha}, \infty\right) \implies \norm{\bm{y}^{(\tau)}}_2 > 0 ~\forall \tau \geq 1$. \hfill (Converging to non-collapsed solution)
\end{enumerate}
If we let $\alpha \to 0$ the interval $\left(\varepsilon, \frac{\varepsilon}{\alpha}\right]$ effectively becomes $\left(\varepsilon, \infty\right)$, and any solution will collapse at some point \citep{mobahi2020selfdistillation}. Analogously, if we let $\alpha \to 1$, the interval $(\varepsilon, \frac{\varepsilon}{\alpha}]$ effectively becomes empty, and all non-collapsed solutions will converge to a non-zero solution. Hence, if $\alpha > 0$, one can obtain non-collapsing convergence with infinite iterations. Furthermore, if we let $\varepsilon \to 0$, then $[0,\varepsilon]$ and $(\varepsilon, \frac{\varepsilon}{\alpha}]$ will practically collapse to empty intervals, and we will always obtain convergence to non-collapsing solutions, which will correspond to an interpolating solution.
For the remainder we assume $\alpha \in (0, 1)$, since the boundary cases are covered in \citet{mobahi2020selfdistillation} ($\alpha = 0$) or is without interest ($\alpha = 1$). Furthermore, we assume that $\norm{\by}_2 > \sqrt{N\varepsilon}$ to avoid a collapsed solution from the beginning. Utilizing the Karush-Kuhn-Tucker (KKT) conditions for this problem, we can rephrase our optimization problem as
\begin{align}\label{eq:problem_mobahi}
    f^{(\tau)} &= \argmin_{f \in \F} \frac{\alpha}{N}\sum_{n=1}^{N} \left(f(\bm{x}_n) - y_n \right)^2 + \frac{1-\alpha}{N}\sum_{n=1}^{N} \left( f(\bm{x}_n) - f^{(\tau-1)}(\bm{x}_n) \right)^2 \nonumber \\ 
    &+ \lambda_\tau \int_\mathcal{X} \int_\mathcal{X} u(\bm{x}, \bm{x}') f(\bm{x}) f(\bm{x}') d\bm{x} d\bm{x}',
\end{align}
where $\lambda_{\tau} \geq 0$. For suitably chosen $\lambda_{\tau}$, one can show that $f^{(\tau)}$ is an optimal solution to our problem\footnote{See \citet{mobahi2020selfdistillation} for a detailed argument.}.

\subsection{Extending our results}
By direct calculations similar to those of \citet{mobahi2020selfdistillation} one can obtain the closed form solution of \eqref{eq:problem_mobahi}, but first we will repeat some definitions from \citet{mobahi2020selfdistillation}. Let the Green's Function $g(x,t)$ be such that $\int_{\mathcal{X}} u(x, x') g(x', t) dx' = \delta(x - t)$, where $\delta$ is the Dirac delta, and let $[\bG]_{j,k} = \frac{1}{N} g(\bx_j, \bx_k)$ and $[\bg(\bx)]_k = \frac{1}{N} g(\bx, \bx_k)$, where $\bG$ is a matrix and $\bg(\bx)$ a vector dependent on $\bx$.
Now we can present the proposition.
\begin{proposition}\label{prop:solution}
For any $\tau \geq 1$, the problem \eqref{eq:problem_mobahi} has a solution of the form
\begin{align*}
    \by^{(\tau)} = \bg(\bx)^\trans \left(\bG +  \lambda_{\tau}\bI\right)^{-1}(\alpha \by + (1-\alpha)\by^{(\tau-1)}),
\end{align*}
where $\by^{(0)} \eqdef \by$.
\end{proposition}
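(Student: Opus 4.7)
My plan is to derive the closed form by applying the calculus of variations directly to the penalized objective in \eqref{eq:problem_mobahi}. The first step is to compute the functional derivative with respect to $f$, using the symmetry of $u$ to handle the double integral. Setting the derivative to zero and collecting the two pointwise penalty terms as a single sum over data points, the stationarity condition reads
\begin{equation*}
\lambda_\tau \int_{\X} u(\bz,\bx') f(\bx')\, d\bx' \;=\; \frac{1}{N}\sum_{n=1}^{N}\bigl[\alpha(y_n - f(\bx_n)) + (1-\alpha)(f^{(\tau-1)}(\bx_n) - f(\bx_n))\bigr]\,\delta(\bz - \bx_n),
\end{equation*}
so the right-hand side is supported only on the training points.

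The second step inverts the integral operator on the left via the Green's function $g$: since $g$ is by construction the integral inverse of $u$, the ansatz $f(\bz) = \int g(\bz,\bt)h(\bt)\,d\bt$ transforms $\int u(\bz,\bx')f(\bx')\,d\bx'$ into $h(\bz)$. Applied to the equation above and using the definition $[\bg(\bz)]_k = g(\bz,\bx_k)/N$, this yields the finite-rank representation
\begin{equation*}
f^{(\tau)}(\bz) \;=\; \frac{1}{\lambda_\tau}\,\bg(\bz)^\trans\bigl[\alpha\by + (1-\alpha)\by^{(\tau-1)} - \by^{(\tau)}\bigr].
\end{equation*}

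Third, I would close the system by evaluating this identity at each $\bx_j$ and stacking the $N$ rows; because $\bg(\bx_j)^\trans$ is by construction the $j$-th row of $\bG$, this produces the finite-dimensional equation $(\bG + \lambda_\tau\bI)\by^{(\tau)} = \bG(\alpha\by + (1-\alpha)\by^{(\tau-1)})$, which is invertible for $\lambda_\tau > 0$ since $\bG$ is positive semi-definite. Substituting the resulting formula for $\by^{(\tau)}$ back into the expression for $f^{(\tau)}(\bz)$ and using the elementary identity $\bI - (\bG + \lambda_\tau\bI)^{-1}\bG = \lambda_\tau(\bG + \lambda_\tau\bI)^{-1}$, the two factors of $\lambda_\tau$ cancel and produce the claimed form.

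The main obstacle I anticipate is justifying the variational manipulations rigorously: showing that $\F$ is a Hilbert space on which the regulariser is strongly convex (which follows from the positivity hypothesis on the double integral of $u$), that the Green's function exists and is symmetric on this class, and that the formal delta-function calculation corresponds to a bona fide Fréchet derivative. Once these analytic prerequisites are granted in the same spirit as \citet{mobahi2020selfdistillation}, strict convexity of both the objective and the feasible set guarantees that the stationary point found above is the unique minimiser, completing the proof.
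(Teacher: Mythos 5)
Your derivation is correct and is exactly the route the paper intends: the paper gives no standalone proof of Proposition \ref{prop:solution}, stating only that it follows ``by direct calculations similar to those of'' \citet{mobahi2020selfdistillation}, and those calculations are precisely your variational argument — stationarity of the penalized objective, inversion of the integral operator via the Green's function, closure of the resulting finite-rank system at the training points, and the identity $\bI - (\bG + \lambda_\tau\bI)^{-1}\bG = \lambda_\tau(\bG+\lambda_\tau\bI)^{-1}$. The only (cosmetic) discrepancy is that the proposition's left-hand side should read $f^{(\tau)}(\bx)$ rather than $\by^{(\tau)}$, a notational slip in the paper that your proof correctly resolves.
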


Since $\bG$ is positive semi-definite, we can decompose it as $\bG = \bV \bD \bV^\trans$. Define $\bB^{(0)} \eqdef \bI$, then for $\tau \geq 1$, we have that $\by^{(\tau)} = \bV \bB^{(\tau)} \bV^\trans \by$, where we set
\begin{align*}
    \bB^{(\tau)} &\eqdef \frac{\alpha}{1-\alpha}\sum_{i=1}^{\tau-1} \left(1-\alpha\right)^{\tau-i} \prod_{j=i}^{\tau-1} \bA^{(j+1)} + \left(1-\alpha\right)^{\tau-1}\prod_{j=1}^\tau \bA^{(j)}, \\
    \bA^{(\tau)} &\eqdef \bD(\bD + \lambda_{\tau} \bI)^{\inv}.
\end{align*}

By equivalent calculations as those of Lemma \ref{lem:recurrent_B}, we have that
\begin{align*}
    \bB^{(\tau)} = \bA^{(\tau)}((1-\alpha)\bB^{(\tau-1)} + \alpha \bI_N) \quad \forall \tau \geq 1,
\end{align*}
and we can now formulate the following theorem, similar to Theorem \ref{thm:B_sparsification}.

\begin{theorem}
For any pair of diagonals of $\bD$, i.e. $d_k$ and $d_j$, where $d_k > d_j$, we have that for all $\tau \geq 1$ and for $\alpha \in (0, 1)$, then
\begin{align}
    &\mathrm{sign}\left(\frac{[\bB^{(\tau)}]_{k,k}}{[\bB^{(\tau)}]_{j,j}} - \frac{[\bB^{(\tau-1)}]_{k,k}}{[\bB^{(\tau-1)}]_{j,j}} \right) \\
    &= \mathrm{sign}\left(\left( \left(\frac{[\bB^{(\tau-1)}]_{k,k}}{[\bB^{(\tau-1)}]_{j,j}} - \frac{[\bA^{(\tau)}]_{k,k}}{[\bA^{(\tau)}]_{j,j}} \right) \frac{[\bA^{(\tau)}]_{j,j}}{[\bB^{(\tau-1)}]_{k,k} ([\bA^{(\tau)}]_{k,k} - [\bA^{(\tau)}]_{j,j})} + 1\right)^\inv - \alpha \right).
\end{align}
\end{theorem}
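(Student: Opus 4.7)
The plan is to recycle the argument from the proof of Theorem \ref{thm:B_sparsification} essentially verbatim, exploiting the fact that the recursion $\bB^{(\tau)} = \bA^{(\tau)}((1-\alpha)\bB^{(\tau-1)} + \alpha \bI_N)$ has the same one-step algebraic structure as in the unconstrained version. The only change is that $\bA$ now carries a $\tau$-index through $\lambda_\tau$, but since the sign characterization in Theorem \ref{thm:B_sparsification} only uses a single step of the recursion to relate $\bB^{(\tau)}$ to $\bB^{(\tau-1)}$, the time-dependence of $\bA^{(\tau)}$ plays no role beyond relabeling entries of $\bA$.

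Concretely, I would abbreviate $B_k \eqdef [\bB^{(\tau-1)}]_{k,k}$, $B_j \eqdef [\bB^{(\tau-1)}]_{j,j}$, $A_k \eqdef [\bA^{(\tau)}]_{k,k}$ and $A_j \eqdef [\bA^{(\tau)}]_{j,j}$, so that the new recursion gives diagonal-wise $[\bB^{(\tau)}]_{k,k} = A_k((1-\alpha)B_k + \alpha)$. The ratio of interest then reads $\frac{[\bB^{(\tau)}]_{k,k}}{[\bB^{(\tau)}]_{j,j}} = \frac{A_k}{A_j}\,\frac{(1-\alpha)B_k + \alpha}{(1-\alpha)B_j + \alpha}$. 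Investigating when this quantity equals $\frac{B_k}{B_j}$, then solving for $B_j$ and substituting back, yields the identity $\frac{B_k}{B_j} = \frac{A_k}{A_j}\bigl(\tfrac{1-\alpha}{\alpha}B_k + 1\bigr) - \tfrac{1-\alpha}{\alpha}B_k$, which is equivalent to the vanishing of $\alpha - \bigl(\bigl(\tfrac{B_k}{B_j} - \tfrac{A_k}{A_j}\bigr)\tfrac{A_j}{B_k(A_k - A_j)} + 1\bigr)^{-1}$. The strict inequality cases ($>$ and $<$) follow by the same sign-preserving manipulations, completing the argument.

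I expect no genuine obstacle here beyond careful bookkeeping. The key invariants to maintain are that $A_k, A_j > 0$ (which holds since $\lambda_\tau \geq 0$ and $\bD$ has positive diagonals, so $A_k = d_k/(d_k + \lambda_\tau) > 0$), that $A_k > A_j$ matches $d_k > d_j$ by monotonicity of $d \mapsto d/(d + \lambda_\tau)$, and that $B_k, B_j > 0$ so the divisions above are legitimate. The latter follows inductively from the recursion together with positivity of $\bA^{(\tau)}$, provided we remain in the non-collapsed regime identified in Section \ref{sec:collapsing_converging}; under the standing assumption $\norm{\by}_2 > \sqrt{N\varepsilon}$ this remains valid throughout the iterations of interest, so no additional case analysis is required beyond what already appears in the proof of Theorem \ref{thm:B_sparsification}.
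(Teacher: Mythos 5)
Your proposal is correct and matches the paper's approach exactly: the paper's proof of this theorem is literally ``follows by analogous calculations to the proof of Theorem \ref{thm:B_sparsification},'' and you carry out precisely those calculations, correctly observing that the sign characterization uses only a single step of the recursion $\bB^{(\tau)} = \bA^{(\tau)}((1-\alpha)\bB^{(\tau-1)} + \alpha \bI_N)$, so the $\tau$-dependence of $\bA^{(\tau)}$ through $\lambda_\tau$ is immaterial. Your added remarks on the positivity invariants ($A_k, A_j, B_k, B_j > 0$ in the non-collapsed regime) are a welcome bit of extra care that the paper leaves implicit.
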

\begin{proof}
The proof follows by analogous calculations to the proof of Theorem \ref{thm:B_sparsification}.
\end{proof}
For the case $\alpha = 1$, the results is identical to Theorem \ref{thm:B_sparsification}, since no distillation is actually performed. However, the case $\alpha = 0$ is more involved and we refer the reader to \citet{mobahi2020selfdistillation} for the treatment of this case, since our setup is identical to that of \citet{mobahi2020selfdistillation} when $\alpha = 0$.

Finally, due to the dependency of $\lambda_{\tau}$ on the solution from the previous step of distillation in this constrained optimization problem, we are unable to obtain a simple recurrent expression as \eqref{eq:recurrent_y} and a limiting solution as in Theorem \ref{thm:kernel_limit}. However, by the results in Section \ref{sec:collapsing_converging}, we can determine, from the norm of the targets, whether the solution collapses or not.

\end{document}